	\providecommand\BibTeX{{%
			\normalfont B\kern-0.5em{\scshape i\kern-0.25em b}\kern-0.8em\TeX}}}
\newcommand{\cmark}{\ding{51}}%
\newcommand{\xmark}{\ding{55}}%
\newcolumntype{L}[1]{>{\raggedright\let\newline\\\arraybackslash\hspace{0pt}}m{#1}}
\newcolumntype{C}[1]{>{\centering\let\newline  \\\arraybackslash\hspace{0pt}}m{#1}}
\newcolumntype{R}[1]{>{\raggedleft\let\newline \\\arraybackslash\hspace{0pt}}m{#1}}
\newtheorem{theorem}{Theorem}
\newtheorem{lemma}[theorem]{Lemma}
\newtheorem{prop}[theorem]{Proposition}
\newtheorem{corollary}[theorem]{Corollary}
\newtheorem{remark}{Remark}
\newtheorem{definition}{Definition}
\def \R{\mathbb R}
\def \st{\;\text{s.t.}\;}
\newcommand{\lonetwo}{\ell_{1\text{-}2}}
\newcommand{\bX}{\mathbf{X}}
\newcommand{\bU}{\mathbf{U}}
\newcommand{\bD}{\mathbf{D}}
\newcommand{\bA}{\mathbf{A}}
\newcommand{\bV}{\mathbf{V}}
\newcommand{\bZ}{\mathbf{Z}}
\newcommand{\bW}{\mathbf{W}}
\newcommand{\bH}{\mathbf{H}}
\newcommand{\bO}{\mathbf{O}}
\newcommand{\bQ}{\mathbf{Q}}
\newcommand{\be}{\mathbf{e}}
\newcommand{\bx}{\mathbf{x}}
\newcommand{\bb}{\mathbf{b}}
\newcommand{\bz}{\mathbf{z}}
\newcommand{\aff}{\mathcal{A}}
\newcommand{\bsig}{\bm{\sigma}}
\newcommand{\rnf}{r_{\text{NNFN}}}
\newcommand{\tilsig}{\tilde{\bsig}}
\newcommand{\NM}[2]{\| #1 \|_{#2} }  
\newcommand{\Prox}[2]{\text{prox}_{#1}(#2)}
\newcommand{\SO}[1]{\mathcal{P}_{\mathbf{\Omega}}(#1)}
\newcommand{\Tr}[1]{\text{tr}( #1 ) }
\newcommand{\Diag}[1]{\text{Diag}(#1)}
\title
{A Scalable, Adaptive and Sound Nonconvex Regularizer 
	for Low-rank Matrix Learning}
\author{Yaqing Wang}
\affiliation{%
	\institution{Business Intelligence Lab, \\Baidu Research}
	\city{Beijing}
	\country{China}
}
\email{wangyaqing01@baidu.com}
\author{Quanming Yao}
\affiliation{%
	\institution{4Paradigm Inc.}
	\institution{EE, Tsinghua University}
	\city{Beijing}
	\country{China}
}
\email{qyaoaa@connect.ust.hk}
\author{James T. Kwok}
\affiliation{%
	\institution{Department of Computer Science, Hong Kong University of Science and Technology}
	\city{Hong Kong}
	\country{China}
}
\email{jamesk@cse.ust.hk}
\begin{document}

\begin{abstract}
Matrix learning is at the core of many machine learning problems.
A number of real-world applications such as collaborative filtering and text mining
 can be formulated as a low-rank matrix completion problems, which recovers incomplete matrix using low-rank assumptions.
To ensure that the matrix solution
has a low rank,
a recent trend is to use nonconvex regularizers that adaptively penalize singular values.
They offer good recovery performance and have nice theoretical properties, 
but are computationally expensive due to repeated access to individual singular values. 
In this paper, based on the key insight that adaptive shrinkage on singular values improve empirical performance, 
we propose a new nonconvex low-rank regularizer called "nuclear norm minus Frobenius norm" regularizer, which is scalable, adaptive and sound. 
We first show it provably holds the adaptive shrinkage property. 
Further, we discover its factored form which bypasses the computation of singular values and allows fast optimization by general optimization algorithms. 
Stable recovery and convergence are guaranteed. 
Extensive low-rank matrix completion experiments on a number of synthetic and
real-world data sets show that
the proposed method obtains state-of-the-art recovery performance while being the fastest in comparison to existing low-rank matrix learning methods. 
\footnote{Correspondence is to Q. Yao.}
\end{abstract}

\begin{CCSXML}
	<ccs2012>
	<concept>
	<concept_id>10010147.10010257</concept_id>
	<concept_desc>Computing methodologies~Machine learning</concept_desc>
	<concept_significance>500</concept_significance>
	</concept>
	<concept>
	<concept_id>10010147.10010257.10010321.10010337</concept_id>
	<concept_desc>Computing methodologies~Regularization</concept_desc>
	<concept_significance>500</concept_significance>
	</concept>
	<concept>
	<concept_id>10002951.10003227.10003351.10003269</concept_id>
	<concept_desc>Information systems~Collaborative filtering</concept_desc>
	<concept_significance>500</concept_significance>
	</concept>
	<concept>
	<concept_id>10003120.10003130.10003131.10003269</concept_id>
	<concept_desc>Human-centered computing~Collaborative filtering</concept_desc>
	<concept_significance>500</concept_significance>
	</concept>
	<concept>
	<concept_id>10002951.10003317.10003347.10003350</concept_id>
	<concept_desc>Information systems~Recommender systems</concept_desc>
	<concept_significance>300</concept_significance>
	</concept>	
	<concept>
	<concept_id>10010147.10010257.10010293.10010309</concept_id>
	<concept_desc>Computing methodologies~Factorization methods</concept_desc>
	<concept_significance>100</concept_significance>
	</concept>
	</ccs2012>
\end{CCSXML}
\ccsdesc[500]{Computing methodologies~Machine learning}
\ccsdesc[500]{Computing methodologies~Regularization}
\ccsdesc[500]{Information systems~Collaborative filtering}
\ccsdesc[500]{Human-centered computing~Collaborative filtering}
\ccsdesc[300]{Information systems~Recommender systems}
\ccsdesc[100]{Computing methodologies~Factorization methods}

\keywords{Low-rank Matrix Learning, Matrix Completion, Nonconvex Regularization, Collaborative Filtering, Recommender Systems}

\maketitle
	
\section{Introduction}
\label{sec:intro}

In many real-world scenarios, the data can be naturally represented as
matrices. Examples include the rating matrices in recommender systems
\cite{srebro2005maximum,koren2009matrix,candes2009exact,li2018adaerror,sharma2019adaptive}, 
the term-document
matrices  
of texts
in natural language processing
\cite{pennington2014glove,shi2018short}, 
images 
in computer vision 
\cite{hu2012fast,gu2014weighted}, and
climate observations 
in spatial-temporal analysis \cite{bahadori2014fast}. 
Thus, matrix learning is an important and fundamental tool in machine learning
\cite{srebro2005maximum,candes2008enhancing,chen2011integrating}, data mining
\cite{koren2009matrix,bahadori2014fast}, 
and computer vision
\cite{gu2014weighted,yao2019large}. 

In this paper, we focus on an important class of matrix learning problems, 
namely
matrix completion, which tries to predict the missing entries of a partially observed matrix \cite{candes2008enhancing}.
For example, in collaborative filtering \cite{koren2009matrix}, 
the rating matrix is often incomplete and one wants to predict the missing user ratings for all items. 
In climate analysis
\cite{bahadori2014fast}, 
observation records from 
only 
a few meteorological stations are available, and one wants to predict climate
information for the other locations.
In image inpainting \cite{hu2012fast,gu2014weighted},
the image has pixels missing and one wants to fill in these missing values. 
To avoid the
problem 
to be ill-posed, the target matrix is often assumed to have a 
low rank 
\cite{candes2009exact}. 
To obtain such a 
solution, a direct approach is to add a 
rank-minimizing term to the optimization objective.
However, rank minimization is NP-hard \cite{candes2009exact}. 
Thus, 
computationally, a 
more feasible approach is to 
use a regularizer that encourages the target matrix to have a small
rank.

There exist various low-rank regularizers. 
Nuclear norm regularizer is the tightest convex surrogate for matrix rank 
\cite{candes2009exact},  which has good recovery and convergence guarantees.
Defined as the sum of singular values,   
nuclear norm requires repeatedly computing the singular value decomposition (SVD), which is expensive. 
To be more efficient, a series of works instead turn to 
matrix factorization which factorizes the recovered matrix into factor matrices. Some of them work towards theoretical justification  \cite{tu2016low,wang2017unified}, while the other targets at designing better algorithms \cite{vandereycken2013low,boumal2015low,gunasekar2017implicit}. 
However, the performance of matrix factorization is not satisfactory \cite{fan2019factor,yao2019large}. 
To this end, factored low-rank regularizers are invented to balance efficiency and effectiveness,  
such as factored nuclear norm \cite{srebro2005maximum} and factored group-sparse regularizer (GSR) \cite{fan2019factor}. 
It is proved that factored nuclear norm can obtain comparable result 
as nuclear norm under mild condition
\cite{srebro2005maximum}.

Recently, nonconvex low-rank regularizers (Table~\ref{tab:reg_cmp}) which penalize less on the more informative large singular values 
are proposed, 
such as 
Schatten-p norm \cite{nie2012low}, 
truncated $\lonetwo$ norm \cite{ma2017truncated}, 
 capped-$\ell_1$ penalty \cite{zhang2010analysis},
log-sum penalty (LSP) \cite{candes2008enhancing}, and
minimax concave penalty (MCP) \cite{zhang2010nearly}. 
These nonconvex regularizers can outperform nuclear norm both theoretically \cite{gui2016towards,mazumder2020matrix} and empirically \cite{lu2015nonconvex,lu2015generalized,yao2019large}. 
However, as shown in Table~\ref{tab:reg_cmp}, 
\textit{none of the above-mentioned regularizers obtain (A) scalability, (B) good performance and (C-D) sound theoretical guarantee simultaneously}.  
 
\begin{table*}[ht]
	\centering
		\caption{Comparisons among nonconvex low-rank regularizers on 
		(A): Scalability (can be optimized in factored form); (B): Performance (can adaptively penalize singular values);
		(C): Statistical guarantee;
		(D): Convergence guarantee.}
	\begin{tabular}
	{c|c|c|c|c|c}
		\hline
		nonconvex low-rank regularizer & expression  & (A) & (B) & (C) & (D)
		\\ \hline
		factored nuclear norm \cite{srebro2005maximum}& $\min_{\bX = \bW\bH^\top}\frac{\lambda}{2} (\NM{\bW}{F}^2 + \NM{\bH}{F}^2)$  & \cmark&\xmark&\cmark&\cmark\\ \hline
		Schatten-p \cite{nie2012low}& $ \lambda (\sum_{i=1}^{m}\sigma^p_i(\bX))^{1/p}$ & \xmark&\xmark&\cmark&\cmark \\ \hline
		factored GSR \cite{fan2019factor}& $\min_{\bX = \bW\bH^\top}\frac{\lambda}{2} (\NM{\bW}{2,1} + \NM{\bH^\top}{2,1})$ & \cmark&\xmark&\cmark&\cmark  \\ \hline
		capped-$\ell_1$, LSP, and MCP \cite{lu2015generalized,yao2019large}& $\lambda \sum_{i=1}^{m}\hat{r}(\sigma_i(\bX))$ (see $\hat{r}$ in Table~\ref{tab:reg_form})  & \xmark&\cmark&\cmark&\cmark \\ \hline
		truncated $\lonetwo$ \cite{ma2017truncated}& $\sum\nolimits_{i = t+1}^n \sigma_{i}(\bX)
		- (\sum\nolimits_{i = t+1}^n \sigma^2_{i}(\bX))^{\nicefrac{1}{2}}$& \xmark&\cmark&\cmark&\cmark \\ \hline
		NNFN& $\NM{\bX}{*}-\NM{\bX}{F}$& \xmark&\cmark&\cmark&\cmark \\ \hline
		factored NNFN& $\min_{\bX = \bW\bH^\top}\frac{\lambda}{2}
		(\NM{\bW}{F}^2 \! + \! \NM{\bH}{F}^2) \! - \! \lambda\NM{ \bW\bH^\top}{F}$& {\cmark}&{\cmark}&{\cmark}&{\cmark}\\ \hline
	\end{tabular}
	\label{tab:reg_cmp}
\end{table*}  

To fill in this blank, we propose a scalable, adaptive and sound nonconvex regularizer based on the key insight that adaptive shrinkage property of common nonconvex regularizers can improve empirical performance. 
Specifically, 
Our contribution can be summarized as follows:
\begin{itemize}[leftmargin=*]
	\item We propose a new nonconvex regularizer called "nuclear norm minus Frobenius norm" (NNFN) regularizer for low-rank matrix learning, which is scalable, adaptive and theoretically guaranteed. 
	\item 
	We show that NNFN regularizer can be factorized to sidestep the expensive SVD. 
	This problem can be optimized by general algorithms such as gradient descent. 
	\item We provide sound theoretical analysis on statistical and convergence properties of both NNFN and factored NNFN regularizers. 
	\item We conduct extensive experiments on 
	both synthetic and a number of real-world data sets including  
	recommendation data 
	and climate record data.
	In comparison to existing methods, 
	results consistently show that the proposed algorithm obtains state-of-the-art
	recovery performance while being the fastest.
\end{itemize}

\noindent
{\bf Notations}: 
Vectors are denoted by lowercase
boldface, matrices by uppercase boldface. 
$(\cdot)^\top$ denotes transpose operation and  $\mathbf{A}_{+} \! = \! [\max (A_{ij}, 0)]$. 
For a vector $\bx = [x_i] \in \R^{m}$, 
$\Diag{\bx}$ constructs a $m \times m$ diagonal matrix with the $i$th diagonal element being  $x_i$.  
$\mathbf{I}$ denotes the identity matrix.
For a 
square matrix $\bX$,
$\Tr{\bX}$ is its trace.
For matrix $\bX\in\R^{m\times n}$ (without loss of generality, we assume that $m\ge n$),
$\NM{\bX}{F} = \sqrt{\Tr{\bX^{\top} \bX}}$ is its Frobenius norm.
Let the singular value decomposition (SVD) of a rank-$k^*$ $\bX$ be
$\bU\Diag{\bsig(\bX)}\bV^\top$, where $\bU\in\R^{m\times k^*}$, $\bV\in\R^{n\times k^*}$,
$\bsig(\bX)=[\sigma_i(\bX)]\in\R^{k^*}$ with 
$\sigma_i(\bX)$ being the $i$th singular value of $\bX$ and 
$\sigma_1(\bX)\ge\sigma_2(\bX)\ge\dots \ge \sigma_k(\bX)\ge0$.

\section{Background: Low-Rank Matrix Learning}
\label{sec:back_low_rank}

As minimizing the rank is NP-hard \cite{candes2009exact}, 
low-rank matrix learning is often formulated as
the following 
optimization
problem: 
\begin{equation}
\min\nolimits_{\bX} f(\bX) 
+ 
\lambda 
r(\bX), 
\label{eq:mc_reg}
\end{equation}
where $f$ is a smooth 
function
(usually the loss), $r(\bX)$ is a regularizer that encourages
$\bX$ 
to be low-rank,
and $\lambda \ge 0$ is a tradeoff hyperparameter. 
Let $\mathbf{\Omega} \in \{0,1\}^{m \times n}$ record
positions of the observed entries (with
$\Omega_{ij}=1$ if $O_{ij}$ is observed, and $0$ otherwise), and
$\SO{\cdot}$ is a projection operator such that
$[\SO{\mathbf{A}}]_{ij} = A_{ij}$
if  $\Omega_{ij} = 1$ and $0$ otherwise. 
Low-rank matrix completion \cite{candes2009exact} tries to recover the underlying  low-rank matrix  $\bX\in \R^{m \times n}$ from an incomplete matrix $\bO \in \R^{m \times n}$ with only a few observed entries. 
It 
usually sets $f(\bX)$ as
\begin{align}
	f(\bX)\equiv\frac{1}{2}\NM{\SO{\bX - \bO}}{F}^2, 
\end{align}
which measures the recovery error.

\subsection{Convex Nuclear Norm Regularizer} 

The convex nuclear norm
$\NM{\bX}{*}=\NM{\bsig(\bX)}{1}$ \cite{candes2009exact}, 
 is the tightest convex surrogate of the matrix rank \cite{fazel2002matrix}.
Problem~(\ref{eq:mc_reg}) 
is usually solved by the  proximal algorithm 
\cite{parikh2014proximal}.
At the $t$th iteration, it generates the next iterate by computing the proximal
step
$\bX_{t + 1} = \Prox{ \eta \lambda r }{ \bX_t - \eta\lambda \nabla (\bX_t) }$,
where 
$\eta > 0$ is the stepsize, and
$\Prox{ \lambda r }{ \bZ } = \arg\min_{\bX} \frac{1}{2}\NM{\bX - \bZ}{2}^2 + \lambda r(\bX)$
is the proximal operator. In general,
the proximal operator should be easily computed.
For the nuclear norm, its
proximal operator is computed as
\cite{cai2010singular}:
\begin{equation} \label{eq:prox}
\Prox{\lambda \NM{\cdot}{*}}{\bZ} 
= \bU \left( \Diag{\bsig(\bZ)} - \lambda \mathbf{I}\right)_{+} \bV^{\top},
\end{equation} 
where 
$\bU \Diag{\bsig(\bZ)} \bV^{\top}$ is the SVD of $\bZ$. 

\subsection{Nonconvex Regularizers}

Recently, various nonconvex regularizers appear (Table~\ref{tab:reg_cmp}). 
Common examples include
the capped-$\ell_1$ penalty \cite{zhang2010analysis}, log-sum penalty
(LSP) \cite{candes2008enhancing}, and minimax concave penalty (MCP)
\cite{zhang2010nearly}. 
\begin{table}[ht]
	\centering
	\caption{Nonconvex low-rank regularizers in the form of \eqref{eq:noncvx}.}
	\begin{tabular}
		{c| c} 
		\hline
		& $\hat{r}(\sigma_i(\bX))$
		\\ \hline
		capped-$\ell_1$  & $\min(\sigma_i(\bX), \theta )$  \\ \hline
		LSP& $\log \left(\frac{1}{\theta} \sigma_i(\bX) + 1\right)$  \\ \hline
		MCP & 
		$\begin{cases}
		\sigma_i(\bX) - \frac{\sigma_i^2(\bX)}{2 \theta\lambda} &
		\text{if}\; \sigma_i(\bX) \le \theta \lambda\\
		\frac{\theta \lambda}{2}             & \text{otherwise}
		\end{cases}$
		\\ \hline
	\end{tabular}
	\label{tab:reg_form}
\end{table}
As shown in Table~\ref{tab:reg_form}, 
they can be written in the general form of
\begin{equation}\label{eq:noncvx}
r(\bX) 
= \sum\nolimits_{i = 1}^n \hat{r}(\sigma_i(\bX)), 
\end{equation}
where $\hat{r}(\alpha)$ is nonlinear, 
concave and non-decreasing for $\alpha \ge 0$
with $\hat{r}(0) = 0$. 
In contrast to the proximal operator 
in 
(\ref{eq:prox}) which
penalizes all singular values of $\bZ$ by the same amount $\lambda$, these nonconvex regularizers penalize less on 
the larger 
singular values which are 
more informative. 
Additionally, 
the nonconvex 
Schatten-p norm \cite{nie2012low} can better approximate rank than nuclear norm. 
Truncated $\lonetwo$ regularizer \cite{ma2017truncated}  
can obtain unbiased approximation for rank 
\cite{ma2017truncated}.  
These nonconvex regularizers outperform nuclear norm
on many applications 
empirically \cite{gu2014weighted,lu2015generalized,yao2019large}, and can obtain 
lower recovery errors 
\cite{gui2016towards}. 
However, learning with 
nonconvex regularizers is very difficult. 
It usually requires dedicated
solvers to leverage special structures
(such as the low-rank-plus-sparse structure in 
\cite{hastie2015matrix,yao2019large}) or involves several iterative algorithms (such as  the difference of convex
functions algorithm (DCA) \cite{hiriarturruty1985generalized} with subproblems solved by the alternating direction method of multipliers
(ADMM) \cite{boyd2011distributed} for truncated $\lonetwo$ regularized problem).
This computation bottleneck limits their applications in practice. 

\subsection{Factored Regularizers}
Note that aforementioned regularizers require access to individual singular values.
As computing the singular values of a $m\times n$ matrix (with
 $m\ge n$) via SVD takes $O(mn^2)$
time, this can be costly for a large matrix.  
Even when rank-$k$ truncated SVD is used, the computation cost is  still
$O(mnk)$.
To relieve the computational burden, factored low-rank regualrizers are invented.
\eqref{eq:mc_reg} can then be rewritten into a factored form as   
\begin{equation}\label{eq:mc_fact}
\min\nolimits_{\bW,\bH}
f(\bW\bH^\top) + \mu g(\bW,\bH),
\end{equation}
where $\bX$ is factorized into $\bW \in \R^{m \times k}$ and $\bH \in \R^{n \times k}$, 
and 
$\mu\geq 0$ is a hyperparameter. 
When $\mu=0$, this reduces to matrix factorization \cite{vandereycken2013low,boumal2015low,tu2016low,wang2017unified,gunasekar2017implicit}.
Not all regularizers $r(\bX)$ have equivalent factored form $g(\bW,\bH)$. 
For a matrix $\bX$ with rank $k^*\leq k$, it is already discovered that 
nuclear norm can be
rewritten in a factored form \cite{srebro2005maximum} as 
$\NM{\bX}{*} = \min_{\bX = \bW\bH^\top} \nicefrac{1}{2} (\NM{\bW}{F}^2 + \NM{\bH}{F}^2)$. 
As for nonconvex low-rank regularizers, 
only Schatten-p norm can be approximated by factored forms \cite{shang2016tractable,fan2019factor}. 
Other nonconvex regularizers, which 
need to penalize individual singular values, 
cannot be written in factored form. 

\section{Nuclear Norm Minus Frobenius Norm (NNFN) Regularizer}
\label{sec:proposed}

Based on the insight that adaptive shrinkage on singular values can improve empirical performance, 
we present 
a new nonconvex regularizer
\begin{equation} \label{eq:new}
\rnf(\bX) 
= \NM{\bX}{*}-\NM{\bX}{F}, 
\end{equation}
which will be called the ``nuclear norm minus Frobenius norm" (NNFN) regularizer.  
Next, we will show that NNFN regularizer 
applies adaptive shrinkage for singular values provably,  
has factored form which allows fast optimization by general algorithms, 
and has sound theoretically guarantee.  

\subsection{Adaptive Shrinkage Property}
\label{sec:prop}

Recall from (\ref{eq:prox}) that the proximal operator of the nuclear norm
equally penalizes each singular value by $\lambda$ until it reaches zero.
In contrast, 
we find that common noncovex regularizers $r(\bX)$ of the general form \eqref{eq:noncvx} all hold the adaptive shrinkage property in 
Proposition~\ref{pr:property}\footnote{All the proofs are in Appendix~\ref{app:proof}.}. 
\begin{prop}[Adaptive Shrinkage Property] \label{pr:property}
	Let $r(\bX)$ be a nonconvex low-rank regularizer of the form \eqref{eq:noncvx}, 
	and 
	$\tilsig = [\tilde{\sigma}_i] =\Prox{\lambda r(\cdot)}{\bsig(\bZ)}$ in (\ref{eq:prox_nnfn}).
	Then,
	(i) ${\sigma}_i(\bZ) \! \ge \! \tilde{\sigma}_i$ (shrinkage);
	and
	(ii) $\sigma_i(\bZ) \! - \! \tilde{\sigma}_i  \!\le \! \sigma_{i + 1}(\bZ) \! - \! \tilde{\sigma}_{i+1}$ 
	(adaptivity),
	where strict inequality holds at least for one $i$.
\end{prop}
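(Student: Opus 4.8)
The plan is to reduce everything to the scalar problem. Since $r(\bX)=\sum_i\hat r(\sigma_i(\bX))$ depends on $\bX$ only through its singular values, the proximal operator $\Prox{\lambda r(\cdot)}{\bZ}$ is well known to be ``rotationally invariant'': it keeps the singular vectors of $\bZ$ and applies the vector proximal map $\Prox{\lambda\sum_i\hat r(\cdot)}{\bsig(\bZ)}$ to the singular values. This vector problem moreover \emph{separates} coordinatewise, so each $\tilde\sigma_i$ is the unique solution of the one–dimensional problem $\min_{\alpha\ge 0}\tfrac12(\alpha-\sigma_i(\bZ))^2+\lambda\hat r(\alpha)$. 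I would state this decomposition first (citing the standard result, and noting that the ordering $\sigma_1(\bZ)\ge\sigma_2(\bZ)\ge\cdots$ is inherited by $\tilde\sigma_1\ge\tilde\sigma_2\ge\cdots$ because $\hat r$ is concave, so the map $\sigma\mapsto\tilde\sigma$ is nondecreasing). This turns the whole proposition into two statements about the scalar map $\phi(\sigma):=\arg\min_{\alpha\ge0}\tfrac12(\alpha-\sigma)^2+\lambda\hat r(\alpha)$.

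For part (i), shrinkage, I would argue at the scalar level: by optimality, either $\tilde\sigma_i=0\le\sigma_i(\bZ)$, or $\tilde\sigma_i>0$ satisfies the stationarity condition $\tilde\sigma_i-\sigma_i(\bZ)+\lambda\hat r'(\tilde\sigma_i)=0$, i.e.\ $\sigma_i(\bZ)-\tilde\sigma_i=\lambda\hat r'(\tilde\sigma_i)\ge 0$ since $\hat r$ is non-decreasing; hence $\tilde\sigma_i\le\sigma_i(\bZ)$ in all cases. For part (ii), adaptivity, I want to show the shrinkage amount $\delta(\sigma):=\sigma-\phi(\sigma)$ is \emph{non-increasing} in $\sigma$. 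Write $z_i:=\sigma_i(\bZ)$ with $z_i\ge z_{i+1}$. On the interior branch, $\delta(z_i)=\lambda\hat r'(\phi(z_i))$; since $\phi$ is nondecreasing and $\hat r'$ is non-increasing (concavity of $\hat r$), the composition $\lambda\hat r'(\phi(\cdot))$ is non-increasing, giving $\delta(z_i)\le\delta(z_{i+1})$, which is exactly (ii). The boundary/thresholding case ($\phi(\sigma)=0$, where $\delta(\sigma)=\sigma$) must be handled separately: there $\delta$ is increasing on that piece, but one checks that once $\phi(z)=0$ it stays $0$ for all smaller $z$, so a larger $i$ (smaller $z_i$) never falls on a "less-shrunk" branch than a smaller $i$; a short monotonicity argument across the two regimes closes the gap. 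Standard references establish that $\phi$ is a continuous, piecewise-defined, monotone function, which is what makes this case analysis clean.

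For the strictness claim — that strict inequality holds for at least one $i$ — I would note that if equality held in (ii) for every $i$ and $\sigma_i(\bZ)=\tilde\sigma_i$ for every $i$, then $\Prox{\lambda r(\cdot)}{\bZ}=\bZ$, meaning $\bZ$ is a fixed point of the proximal operator; but a nonzero matrix cannot be a fixed point because $\hat r$ is nonlinear (strictly concave on at least some interval), so $\lambda\hat r'(\alpha)>0$ for $\alpha$ in that interval, forcing strict shrinkage on any singular value landing there. The only remaining escape is $\bZ=\mathbf 0$, which is an uninteresting degenerate case one can exclude or note trivially.

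The main obstacle I anticipate is \emph{not} the scalar inequalities themselves, which are short, but two bookkeeping points that need care: (a) correctly invoking the rotational-invariance decomposition with the right ordering so that $\tilde\sigma_i$ is genuinely the $i$th coordinate of the solved scalar vector problem and is sorted consistently with $\sigma_i(\bZ)$ — this relies on concavity of $\hat r$ and deserves an explicit lemma; and (b) the interface between the thresholded regime ($\phi=0$) and the interior regime in part (ii), where $\delta(\sigma)=\sigma$ is locally increasing and so the "shrinkage amount is monotone" slogan is literally false on that piece — the statement (ii) still holds, but only because the ordering of the $z_i$'s prevents a bad configuration, and that requires the observation that the thresholding set $\{\sigma:\phi(\sigma)=0\}$ is a downward-closed interval $[0,\tau]$. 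Once those two points are pinned down, the rest is routine one-variable calculus with $\hat r'$ non-increasing and nonnegative.
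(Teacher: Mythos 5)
Your overall route is the same as the paper's: reduce the matrix proximal step to independent scalar problems on the singular values (the paper cites Theorem~1 of Lu et al.\ for exactly this decomposition), get shrinkage from the first-order optimality condition, and get adaptivity from concavity of $\hat r$ via $\sigma_i(\bZ)-\tilde{\sigma}_i=\lambda\hat r'(\tilde{\sigma}_i)$ together with monotonicity of the scalar prox map. The reduction, part (i), and the strictness remark (nonlinearity of $\hat r$) all match the paper's Lemma in Appendix~A.1, and you are more explicit than the paper about needing $\tilde{\sigma}_i\ge\tilde{\sigma}_{i+1}$ before invoking concavity.

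The genuine problem is the piece you flag in point (b) and then claim to close: the thresholded regime. Your patch cannot work, because the difficulty is not a mismatch \emph{between} branches but a reversal \emph{within} the thresholded branch. If two distinct singular values $\sigma_i(\bZ)>\sigma_{i+1}(\bZ)$ both lie in the downward-closed interval $[0,\tau]$ on which $\phi\equiv 0$ --- and for capped-$\ell_1$, LSP and MCP this interval has nonempty interior whenever $\lambda>0$; the paper's Figure~1 is explicitly drawn with $\tau=5$ --- then $\sigma_i(\bZ)-\tilde{\sigma}_i=\sigma_i(\bZ)>\sigma_{i+1}(\bZ)=\sigma_{i+1}(\bZ)-\tilde{\sigma}_{i+1}$, exactly the reverse of (ii). The mixed case $\tilde{\sigma}_i>0=\tilde{\sigma}_{i+1}$ can also fail (capped-$\ell_1$: the shrinkage is $\lambda$ just above $\tau=\lambda$ but $\sigma_{i+1}(\bZ)<\lambda$ just below). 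So no ``short monotonicity argument across the two regimes'' exists; (ii) holds only for indices with $\tilde{\sigma}_{i+1}>0$. For what it is worth, the paper's own proof hides the identical gap by writing the optimality condition as the equation $p-z+\lambda\partial\hat r(p)=0$ and asserting $\partial\hat r(p_1)\le\partial\hat r(p_2)$: at $p_1=p_2=0$ that chain would force $z_1\le z_2$, contradicting $z_1\ge z_2$; the correct condition at $p=0$ is the inclusion $z\in\lambda\,\partial\hat r(0)$, from which nothing of the sort follows. You located the right soft spot, but the fix you sketch is not available; the statement must either be restricted to the un-thresholded singular values or it admits the counterexample above.
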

It shows that $\Prox{\lambda r}{\cdot}$
adaptively shrinks the singular values of its matrix argument, 
in that larger singular values are penalized less.
This property is important for obtaining good empirical
performance 
\cite{gu2014weighted,hu2012fast,lu2015generalized,lu2015nonconvex,yao2019large}. Other nonconvex regularizers such as truncated $\lonetwo$ and Schatten-p norm, do not have this property due to the lack of analytic proximal operators.
Figure~\ref{fig:reg_prox}
shows the shrinkage performed by adaptive nonconvex regularizers versus the convex nuclear norm regularizer. 
As can be seen,  
the convex nuclear norm regularizer
shrinks all singular values by the same amount; whereas the adaptive
nonconvex regularizers enforce different amounts of 
shrinkage depending on the magnitude of
$\sigma_i(\bZ)$. 

\begin{figure}[ht]
	\centering
	\includegraphics[width = 0.246\textwidth]{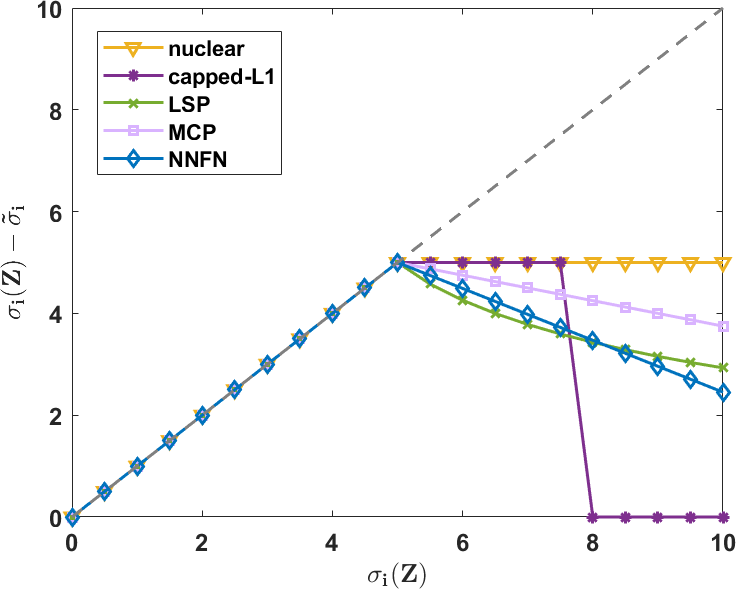}	
	\caption{Shrinkage 
		performed by different regularizers.
		The hyperparameters are tuned such that $\tilde{\sigma}_i$ is zero for $\sigma_i(\bZ)\le 5$.  }
	\label{fig:reg_prox}
\end{figure}

Here, we show
the proposed NNFN regularizer in \eqref{eq:new} also provably satisfies  
adaptive shrinkage of the
singular values when used with a proximal algorithm. 
We first present the proximal operator of $\rnf(\cdot)$ in Proposition~\ref{pr:l12_to_nnfn}. As $\Prox{\lambda  \NM{\cdot}{1\text{-}2}}{\bsig(\bZ)}$ returns a sparse vector \cite{lou2018fast}, the resultant $\Prox{\lambda \rnf}{\bZ}$ is low-rank.  

\begin{prop}\label{pr:l12_to_nnfn}
Given a matrix $\bZ$, 
let its SVD be  $\bar{\bU}\Diag{{\bsig}(\bZ)}\bar{\bV}^\top$, 
and $\lambda\le \NM{\bsig(\bZ)}{\infty}$.
\begin{align}
\Prox{\lambda \rnf}{\bZ}
= \bar{\bU}\Diag{\Prox{\lambda \NM{\cdot}{1\text{-}2}}{\bsig(\bZ)}}
\bar{\bV}^\top,
\label{eq:prox_nnfn}
\end{align}
where 
$\Prox{\lambda  \NM{\cdot}{1\text{-}2}}{\bz}$ has closed-form solution \cite{lou2018fast}.
\end{prop}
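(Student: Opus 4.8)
The plan is to reduce the matrix-valued proximal operator to a vector-valued one on the singular values, using the fact that both $\NM{\cdot}{*}$ and $\NM{\cdot}{F}$ are unitarily invariant (spectral) functions of $\bX$. Concretely, I would observe that $\rnf(\bX) = h(\bsig(\bX))$ where $h(\bz) = \NM{\bz}{1} - \NM{\bz}{2} = \NM{\bz}{1\text{-}2}$ is a symmetric, absolutely symmetric function of the singular-value vector. The classical result on proximal operators of spectral functions (a von Neumann / Ky Fan type argument, as used e.g. in \cite{cai2010singular} for the nuclear norm) then says that $\Prox{\lambda \rnf}{\bZ}$ is obtained by applying $\Prox{\lambda h}{\cdot}$ to the vector $\bsig(\bZ)$ and rebuilding the matrix with the same singular vectors $\bar{\bU}, \bar{\bV}$. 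That gives exactly \eqref{eq:prox_nnfn}, and the closed form for $\Prox{\lambda \NM{\cdot}{1\text{-}2}}{\bz}$ is then quoted from \cite{lou2018fast}.

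In more detail, the key steps, in order, would be: (1) Write $\frac{1}{2}\NM{\bX-\bZ}{F}^2 + \lambda\rnf(\bX) = \frac{1}{2}\NM{\bX}{F}^2 - \iprod{\bX}{\bZ} + \frac{1}{2}\NM{\bZ}{F}^2 + \lambda\NM{\bX}{*} - \lambda\NM{\bX}{F}$, and note that $\NM{\bX}{F}$, $\NM{\bX}{*}$ depend only on $\bsig(\bX)$, while the only coupling between $\bX$'s singular vectors and $\bZ$ is through the inner product term $\iprod{\bX}{\bZ}$. (2) Invoke von Neumann's trace inequality, $\iprod{\bX}{\bZ} \le \iprod{\bsig(\bX)}{\bsig(\bZ)}$, with equality when $\bX$ and $\bZ$ share singular vectors; hence the minimizer can be taken of the form $\bar{\bU}\Diag{\bz}\bar{\bV}^\top$ for some $\bz \ge 0$ sorted decreasingly. (3) Substitute this form: the objective collapses to $\frac{1}{2}\NM{\bz - \bsig(\bZ)}{2}^2 + \lambda(\NM{\bz}{1} - \NM{\bz}{2}) + \text{const}$, so the optimal $\bz$ is precisely $\Prox{\lambda\NM{\cdot}{1\text{-}2}}{\bsig(\bZ)}$. (4) Cite \cite{lou2018fast} for the closed form of this last proximal operator, and check the ordering/nonnegativity of the output so that step (2)'s equality case is legitimately attained — this is where the hypothesis $\lambda \le \NM{\bsig(\bZ)}{\infty}$ enters (it guarantees the $\ell_{1\text{-}2}$ proximal map is nonzero, so the Frobenius term is differentiable at the solution and the reconstruction is genuinely optimal rather than a spurious critical point).

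The main obstacle I anticipate is rigor around the nonconvexity of $\rnf$: unlike the nuclear-norm case, the objective is not convex (we are subtracting $\lambda\NM{\bX}{F}$), so one must be careful that (a) the infimum is attained, (b) the von Neumann reduction still applies to the global minimizer despite nonconvexity — it does, since the trace-inequality bound on $\iprod{\bX}{\bZ}$ holds unconditionally and the remaining terms are spectral, so restricting to aligned singular vectors cannot increase the objective — and (c) the quoted closed form from \cite{lou2018fast} is indeed the global, not merely a local, minimizer of the vector subproblem, which is exactly the role of the condition $\lambda \le \NM{\bsig(\bZ)}{\infty}$. Handling the degenerate case of repeated singular values (where singular vectors are not unique) is a routine continuity/perturbation argument that I would mention but not belabor.
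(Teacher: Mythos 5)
Your proposal is correct and follows essentially the same route as the paper's proof: expand the proximal objective, apply von Neumann's trace inequality to align the singular vectors of the minimizer with those of $\bZ$, reduce to the vector problem $\min_{\tilsig}\frac{1}{2}\NM{\tilsig-\bsig(\bZ)}{2}^2+\lambda\NM{\tilsig}{1\text{-}2}$, and invoke the closed form from the cited reference. If anything, your treatment is more careful than the paper's on the points of nonconvexity, attainment, and the role of the hypothesis $\lambda\le\NM{\bsig(\bZ)}{\infty}$, which the paper's own proof glosses over (it verifies only the ordering and nonnegativity of the output by a swapping argument).
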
 

Now, we are ready to prove in 
the following
Corollary that NNFN regularizer also shares the adaptive shrinkage property. This can lead to better empirical performance as discussed earlier.  
\begin{corollary} \label{cr:property_other} 
	The two properties in Proposition~\ref{pr:property} also hold for
	the proximal operators of the NNFN regularizer.
\end{corollary}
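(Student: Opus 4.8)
The plan is to reduce Corollary~\ref{cr:property_other} to Proposition~\ref{pr:property} by exhibiting the NNFN regularizer, \emph{restricted to its action on singular values}, as a spectral function of the form~\eqref{eq:noncvx}. Concretely, Proposition~\ref{pr:l12_to_nnfn} tells us that $\Prox{\lambda \rnf}{\bZ}$ acts on $\bZ$ by keeping the singular vectors $\bar\bU,\bar\bV$ fixed and replacing $\bsig(\bZ)$ by $\tilsig = \Prox{\lambda \NM{\cdot}{1\text{-}2}}{\bsig(\bZ)}$. So everything is governed by the scalar-vector map $\bz \mapsto \Prox{\lambda\NM{\cdot}{1\text{-}2}}{\bz}$, and it suffices to verify the two properties (shrinkage; adaptivity with at least one strict inequality) for this map applied to the sorted nonnegative vector $\bsig(\bZ)$. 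The first step, then, is to recall the closed-form expression for $\Prox{\lambda\NM{\cdot}{1\text{-}2}}{\bz}$ from \cite{lou2018fast}: for $\lambda \le \NM{\bz}{\infty}$ one first soft-thresholds by $\lambda$, then adds back a positive multiple of the (normalized) soft-thresholded vector, so that each surviving coordinate of the output has the form $\tilde\sigma_i = (\sigma_i - \lambda) + \lambda\,(\sigma_i-\lambda)_+/\NM{(\bsig-\lambda)_+}{2}$ while coordinates with $\sigma_i \le \lambda$ are set to $0$.

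With that formula in hand, property~(i) (shrinkage, $\sigma_i \ge \tilde\sigma_i$) follows by a direct pointwise comparison: on the zeroed coordinates it is trivial, and on a surviving coordinate one checks $\sigma_i - \tilde\sigma_i = \lambda - \lambda(\sigma_i-\lambda)/\NM{(\bsig-\lambda)_+}{2} = \lambda\bigl(1 - (\sigma_i-\lambda)/\NM{(\bsig-\lambda)_+}{2}\bigr) \ge 0$, since $(\sigma_i-\lambda) \le \NM{(\bsig-\lambda)_+}{2}$. For property~(ii) (adaptivity), one shows that the residual $\delta_i := \sigma_i - \tilde\sigma_i$ is non-increasing in $i$ (equivalently non-decreasing as we move to smaller singular values, matching the inequality $\delta_i \le \delta_{i+1}$ as stated). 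From the computation above, on surviving coordinates $\delta_i = \lambda(1 - (\sigma_i-\lambda)/\NM{(\bsig-\lambda)_+}{2})$, which is increasing in $i$ because $\sigma_i$ is non-increasing in $i$; and on the zeroed tail $\delta_i = \sigma_i$, which is also non-increasing in $i$; finally one checks the inequality holds at the boundary between the two regimes, since a zeroed coordinate has $\delta_i = \sigma_i \le \lambda$ and the last surviving coordinate has $\delta_i = \lambda(1 - (\sigma_i-\lambda)/\NM{(\bsig-\lambda)_+}{2}) \le \lambda$ as well, with the monotonicity across the junction following from $\sigma$ being sorted. Strictness of at least one inequality holds whenever $\bsig(\bZ)$ is not a constant vector on its support (which is automatic once more than one singular value is nonzero and they are not all equal, or once thresholding kills some coordinate); since $\lambda \le \NM{\bsig}{\infty}$ guarantees the proximal output is not simply $\bsig$ itself, the residuals cannot all coincide, giving at least one strict inequality.

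I expect the main obstacle to be the bookkeeping at the \emph{junction} between the ``surviving'' coordinates and the thresholded-to-zero tail, together with pinning down exactly when strict inequality occurs — one must be careful that the $\ell_{1\text{-}2}$ proximal map is not literally a separable sum $\sum_i \hat r(\sigma_i)$ (the Frobenius term couples the coordinates through $\NM{(\bsig-\lambda)_+}{2}$), so Proposition~\ref{pr:property} does not apply verbatim and the monotonicity of $\delta_i$ has to be argued directly from the closed form rather than quoted. A clean way to handle this is to treat $\NM{(\bsig-\lambda)_+}{2}$ as a fixed positive constant $c$ once $\bsig(\bZ)$ is given, after which $\tilde\sigma_i = (\sigma_i-\lambda)(1+\lambda/c)$ on the support is an affine-in-$\sigma_i$ shrinkage and all the monotonicity claims become elementary; the only genuinely delicate point is then checking that $c \le \NM{(\bsig-\lambda)_+}{1}$-type bounds do not force any residual to be negative, which is where the hypothesis $\lambda \le \NM{\bsig(\bZ)}{\infty}$ is used.
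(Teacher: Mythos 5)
Your overall strategy is the same as the paper's: use Proposition~\ref{pr:l12_to_nnfn} to reduce the matrix proximal step to the vector problem on $\bsig(\bZ)$, and then read off shrinkage and adaptivity from an explicit description of $\Prox{\lambda\NM{\cdot}{1\text{-}2}}{\cdot}$. On the coordinates that survive thresholding your computations are correct and equivalent to the paper's: the paper works from the stationarity condition $\tilsig-\bsig(\bZ)+\lambda-\lambda\tilsig/\NM{\tilsig}{2}=0$, which gives $\sigma_i(\bZ)-\tilde\sigma_i=\lambda(1-\tilde\sigma_i/\NM{\tilsig}{2})$, and since $\tilsig$ is a positive multiple of $(\bsig(\bZ)-\lambda)_+$ this is exactly your $\delta_i=\lambda\bigl(1-(\sigma_i(\bZ)-\lambda)/\NM{(\bsig(\bZ)-\lambda)_+}{2}\bigr)$; both yield $0\le\delta_i$ and $\delta_i\le\delta_{i+1}$ there.

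The genuine gap is in your treatment of the zeroed tail and the junction, which is precisely the part the paper silently skips (its stationarity condition is only valid where $\tilde\sigma_i>0$). On the tail you have $\delta_i=\sigma_i(\bZ)$, which is \emph{non-increasing} in $i$ --- the wrong direction for $\delta_i\le\delta_{i+1}$ --- and observing that both residuals at the junction are $\le\lambda$ does not order them. In fact the adaptivity inequality genuinely fails there: take $\bsig(\bZ)=(10,\,2,\,0.001)$ and $\lambda=1$, so $(\bsig(\bZ)-\lambda)_+=(9,1,0)$ with norm $\sqrt{82}$; then $\delta_2=1-1/\sqrt{82}\approx 0.89$ while $\delta_3=0.001$, so $\delta_2>\delta_3$. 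The statement can only be salvaged by restricting adaptivity to the indices with $\tilde\sigma_i>0$ (which is what the paper's argument implicitly does), so you should state and prove it in that restricted form rather than try to push monotonicity across the junction. A secondary slip: your closing claim that ``the proximal output is not $\bsig(\bZ)$ itself, hence the residuals cannot all coincide'' is false --- if all singular values are equal and exceed $\lambda$, every $\delta_i$ equals $\lambda(1-1/\sqrt{n})$; strictness of at least one inequality needs $\sigma_i(\bZ)\ne\sigma_{i+1}(\bZ)$ for some $i$ in the support, which is the condition the paper records at the end of its proof.
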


\section{Algorithms for (\ref{eq:mc_reg}) with NNFN Regularizer}
\label{sec:opt}

With the proximal operator obtained in 
Proposition~\ref{pr:l12_to_nnfn},
learning with the NNFN regularizer can be readily solved with the proximal algorithm.
However, it
still relies on computing the SVD in each iteration. 
To tackle this problem, 
we then present a simple and scalable algorithm that avoids SVD computations by using the factored NNFN regularizer .

\subsection{A Proximal Algorithm}
\label{sec:naive}

We first present a direct application of the proximal algorithm to
problem \eqref{eq:mc_reg} with the NNFN regularizer.
At the $t$th iteration, 
we obtain  $\bZ^{t}=\bX^{t-1} - \eta\nabla f(\bX^{t-1})$,
and then perform the proximal step in
Proposition~\ref{pr:l12_to_nnfn}.
The complete procedure 
is shown in  Algorithm~\ref{alg:nnfn_pa}. 

\begin{algorithm}[H]
	\caption{A proximal algorithm for \eqref{eq:mc_reg} with NNFN.}
	\begin{algorithmic}[1]
		\REQUIRE Randomly initialized $\bX^0$, stepsize $\eta$;
		\FOR{$t=1,\dots,T$}
		\STATE obtain $\bZ^{t}=\bX^{t-1} - \eta\nabla f(\bX^{t-1})$; 
		\STATE update $\bX^{t}$ as $\Prox{\lambda \rnf}{\bZ^t}$;
		\ENDFOR
		\RETURN $\bX^T$.
	\end{algorithmic}
	\label{alg:nnfn_pa}
\end{algorithm}

\subsubsection{Complexity}  
The iteration time complexity of 
Algorithm~\ref{alg:nnfn_pa} is dominated by SVD. 
Let $r_t$ ($n \geq r_t\geq k$) be the rank estimated at the $t$th iteration. We can perform rank-$k$ truncated SVD, which 
takes $O(mnk)$. 
The space complexity is $O(mn)$ to keep full matrices.

\subsection{A General Solver for Factored Form}
\label{sec:better}

In this section, we propose 
a more efficient solver which removes the SVD bottleneck.
The key observation is
that the NNFN regularizer 
in (\ref{eq:new}) 
can be computed on the recovered matrix without touching singular values explicitly. 
The 
Frobenius norm 
of a matrix
can be computed 
without using its
singular values, and the 
nuclear norm can be replaced by the factored nuclear norm. 
With this factored NNFN regularizer, 
the matrix learning problem then becomes:
\begin{equation}\label{eq:fnnfn}
\min_{\bW,\bH}F(\bW,\bH)\equiv f(\bW\bH^\top)
+ \frac{\lambda}{2}
\left( \NM{\bW}{F}^2+\NM{\bH}{F}^2 \right) 
- \lambda\NM{ \bW\bH^\top}{F}.
\end{equation}
Thus, SVD can be completely avoided.
In contrast, the other nonconvex low-rank regularizers (including the very related
truncated $\lonetwo$ regularizer with $t\neq 0$)
need to penalize individual singular values, and so
do not have factored form.




Unlike other regualrizers which requires dedicated solvers, the reformulated 
problem \eqref{eq:fnnfn} can be simply solved by general solvers such as gradient descent. In
particular,
gradients of $F(\bW,\bH)$ can be easily obtained.
Let $\bQ \equiv \bW\bH^{\top} \not= \mathbf{0}$, and
	$c =  \lambda /\NM{ \bW\bH^\top}{F}$. Then, we obtain
	\begin{align}\label{eq:update-w}
		\nabla_{\bW} F(\bW,\bH)
		&=[\nabla_{\bQ} f(\bQ)]\bH\!+\!\lambda \bW\! -\! c \bW (\bH^\top \bH),\\\label{eq:update-h}
		\nabla_{\bH} F(\bW,\bH)
		&= [\nabla_{\bQ} f(\bQ)]^\top \bW\!+\!\lambda \bH \!-\! c \bH (\bW^\top \bW). 
	\end{align}
These only involve
simple matrix multiplications, without any SVD computation.
Moreover, we can easily replace the simple gradient descent by recent solvers with improved performance.
The complete procedure is shown in Algorithm~\ref{alg:nnfn_gd}. 
\begin{algorithm}[H]
	\caption{A general solver for \eqref{eq:mc_reg} with factored NNFN.}
	\begin{algorithmic}[1]
		\REQUIRE  Randomly initialized $\bW^0, \bH^0$, stepsize $\eta$;
		\FOR{$t = 1, \dots, T $}
		\STATE update $\bW^{t} = \bW^{t-1}-\eta\nabla_{\bW} F(\bW^t,\bH^t)$ using \eqref{eq:update-w};
		\STATE update $\bH^{t} = \bH^{t-1}-\eta\nabla_{\bH} F(\bW^t,\bH^t)$ using \eqref{eq:update-h};
		\ENDFOR 
		\RETURN $\bX^T = \bW^{T}(\bH^{T})^\top$.
	\end{algorithmic}
	\label{alg:nnfn_gd}
\end{algorithm}

\subsubsection{Complexity}  
Learning with factored NNFN does not need the expensive SVD, thus it has a much lower time
complexity. Specifically, 
multiplication of the sparse matrix 
	$\nabla_{\bQ} f(\bQ)=
\SO{\bQ - \bO}$
and 
$\bH$ in 
	$\nabla_{\bW} F(\bW,\bH)$
(and similarly 
multiplication of 
$[\nabla_{\bQ} f(\bQ)]^\top$
and $\bW$ in
	$\nabla_{\bH} F(\bW,\bH)$)
takes $O( \NM{\mathbf{\Omega}}{0} k)$ time, 
computation of
$\bW(\bH^\top \bH)$, $\bH(\bW^\top \bW)$ and  
$\NM{\bW\bH^\top}{F}$ (computed as $\sqrt{\Tr{(\bH^\top \bH)(\bW^\top \bW)}}$)
takes $O(mk^2)$ time. 
Thus, the 
iteration
time complexity
is $O(\NM{\mathbf{\Omega}}{0}k+ mk^2)$. 
As for space, using the factored form reduces the parameter size from
$O(mn+\NM{\mathbf{\Omega}}{0})$ to $O(mk+\NM{\mathbf{\Omega}}{0})$, where
$\NM{\mathbf{\Omega}}{0}$ is the space for keeping 
a sparse 
$\bO$. 

\begin{table*}[hbtp]
	\centering
		\caption{State-of-the-art solvers for various matrix completion methods.  
		Here, $r_t$ (usually $\geq k$) is an estimated rank at the $t$th iteration,
		$\hat{r}_t=r_t+r_{t-1}$, and $q$ is number  of inner ADMM iterations used in
		\cite{ma2017truncated}.
	}
	\begin{tabular}
		{C{135px} |C{180px}|c|c}
		\hline
		regularizer&state-of-the-art solver&             time
		complexity      &             space
		complexity                    \\ \hline
		nuclear norm \cite{candes2009exact} & softimpute algorithm with alternating least squares\cite{hastie2015matrix}              &                   $O(\NM{\mathbf{\Omega}}{0} k + m\hat{r}_t^2)$            &$O((m + n)r_t + \NM{\mathbf{\Omega}}{0})$                        \\\hline 
		factored nuclear norm\cite{srebro2005maximum}	& alternating gradient descent \cite{ge2016matrix}     &   $O(\NM{\mathbf{\Omega}}{0} k + mk)$    &$O((m + n)k + \NM{\mathbf{\Omega}}{0})$ \\ \hline 
		probabilistic matrix  factorization \cite{mnih2008probabilistic} &	Bayesian probabilistic matrix factorization  solver using Markov Chain Monte Carlo  \cite{salakhutdinov2008bayesian}      &   $O(\NM{\mathbf{\Omega}}{0} k^2 + mk^3)$  &$O((m + n)k + \NM{\mathbf{\Omega}}{0})$   \\ \hline
		factored GSR \cite{fan2019factor}& proximal alternating linearized algorithm coupled with iteratively reweighted minimization \cite{fan2019factor} & $O(mnk)$&$O((m + n)k + \NM{\mathbf{\Omega}}{0})$\\ \hline
		truncated $\lonetwo$ \cite{ma2017truncated} & DCA algorithm with sub-problems solved by ADMM algorithm\cite{ma2017truncated}&$O(qmn^2)$&$O(mn)$\\\hline
		capped-$\ell_1$, LSP, and MCP \cite{lu2015generalized,yao2019large}& a solver leveraging power method and  "low-rank plus sparse" structure \cite{yao2019large} 
		& $O(\NM{\mathbf{\Omega}}{0} r_t + m\hat{r}^2_t)$ &$O((m + n)r_t + \NM{\mathbf{\Omega}}{0})$ \\ \hline
		NNFN&proximal algorithm                                 &
		$O(mnr_t)$                         &$O(mn)$         \\\hline
		factored NNFN&general solvers such as gradient descent                           &   $O(\NM{\mathbf{\Omega}}{0} k + mk^2)$  &$O((m + n)k + \NM{\mathbf{\Omega}}{0})$  \\ \hline
	\end{tabular}
	\label{tab:cost}
\end{table*}

\subsection{Comparison with Optimizing Other Regularizers} 
We compare the proposed solvers with state-of-the-art solvers for other regularizers 
	in Table~\ref{tab:cost}.  	
Among nonconvex regularizers, only
factored NNFN can be solved by general solvers such as gradient descent. which makes it simple and efficient. 
In contrast, other  
nonconvex regularizers are difficult to optimize and require dedicated solvers. 
Although the time complexity is comparable in big O, we observe in experiments that learning with factored NNFN  is much more scalable.
	Additionally,  for space,
only solvers for truncated $\lonetwo$ \cite{ma2017truncated} and NNFN require
keeping the complete matrix
which takes $O(mn)$ space,
while the other methods have comparable and much smaller space requirements. 

\begin{remark} 
	Truncated $\lonetwo$ regularizer
	\cite{ma2017truncated} is a related existing nonconvex regularizer. 
	When $t=0$, it reduces to NNFN regularizer. 
	However, 
	without the operation to truncate singular values, NNFN regularizer 
	(1) is proved to enforce adaptive shrinkage while truncated $\lonetwo$ does not;
	(2) allows cheap closed-form proximal operator while truncated $\lonetwo$ requires a 
	combined use of DCA and ADMM; 
	(3) can be efficiently optimized in factored form without taking SVD while truncated $\lonetwo$ can not;  
	(4) has recovery bound for both itself and its factored form while the analysis in \cite{ma2017truncated} does not apply for factored form.  
	Therefore, the discovery of NNFN regularizer is new and important. 
\end{remark}
\section{Theoretical Analysis} 
\label{sec:theory}

Here, we analyze the statistical and convergence properties for the proposed algorithms.

\subsection{Recovery Guarantee}

We establish statistical guarantee based on Restricted Isometry Property (RIP) \cite{candes2005decoding} introduced below.
\begin{definition}[Restricted Isometry Property (RIP) \cite{candes2005decoding}] 
	An affine transformation $\mathcal{A}$ satisfies RIP if for all $\bX \in
	\R^{m\times n}$ of rank at most $k$, 
	there exists a constant $\delta_k\in(0,1)$ such that: 
	\begin{equation}\label{eq:rip}
	(1- \delta_k) \NM{\bX}{F}^2 \leq \|\mathcal{A}(\bX)\|_2^2 \leq (1+\delta_k) \NM{\bX}{F}^2.
	\end{equation}
\end{definition}

Under the RIP condition, 
we prove in the following that stable recovery is guaranteed where
the estimation error depends linearly on $\NM{\be}{2}^2$. 
\begin{theorem}[Stable Recovery]\label{thm:recovery_error}
	Consider $f(\bX) =\frac{1}{2}\NM{\aff(\bX)-\bb}{2}^2$, where
	$\aff$ is an affine transform
	satisfying the 
	RIP with $\delta_{2k}
	\leq 1/3$, 
	and $\bb = \aff(\bX^*) + \be$ is a measurement vector corresponding to a rank-$k^*$ matrix $\bX^*$ and error vector $\be$. 
	Assume sequence $\{\bX^t\}$ with $f(\bX^{t+1})< f(\bX^{t})$ and each $\bX^t$ is the iterate obtained by optimizing the following two 
	equivalent constrained formulations 
	of \eqref{eq:fnnfn}: 
	(i) $\bX^t$ is the iterate of optimizing $\min\nolimits_{\bX} 
		f(\bX)
		\!\;\!\st\!\;\!
		\rnf(\bX) \!\le\! \beta'$, where $\beta'\geq 0$ is a hyperparameter. 
	or (ii)  $\bX^t\!=\!\bW^t(\bH^t)^\top$ is the iterate of optimizing $\min\nolimits_{\bW,\bH} 
		f(\bW\bH^\top)$ $
		\!\;\!\st\!\;\! 
		\frac{1}{2}(\NM{\bW}{F}^2\!+\!\NM{\bH}{F}^2) \!-\! \NM{ \bW\bH^\top }{F} \!\le\! \beta'$, where $\beta'\geq 0$ is a hyperparameter. 
	Then, 
	the recovery error 
	is bounded as $\NM{\bX^t-\bX^*}{F}^2\le c\|\be\|^2_2$  
	for some constant $c$ and sufficiently large $t$. 
\end{theorem}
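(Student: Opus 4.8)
The plan is to show that along any strictly-decreasing sequence $\{\bX^t\}$ produced by either constrained formulation, the iterates eventually become ``approximately feasible'' for a rank constraint, and then to invoke a standard RIP-based perturbation bound. The key observation is that for a matrix of rank $k$, one has $\NM{\bX}{*} - \NM{\bX}{F} = \sum_{i=1}^k \sigma_i(\bX) - (\sum_{i=1}^k \sigma_i^2(\bX))^{1/2}$, which is small precisely when the singular value vector is nearly $1$-sparse, i.e.\ when $\bX$ is close to rank one in a soft sense; more importantly, $\rnf(\bX)=0$ iff $\rank(\bX)\le 1$, and for higher ranks $\rnf$ grows but stays controlled by $\NM{\bX}{F}$ (indeed $\rnf(\bX)\le(\sqrt{n}-1)\NM{\bX}{F}$ and also $\rnf(\bX)\ge(\sqrt{k}-1)^{-1}\cdot(\text{something})$). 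I would first record the exact inequality relating $\rnf(\bX)$, the rank, and $\NM{\bX}{F}$ — this is the deterministic ``geometry of the regularizer'' lemma that makes everything go.

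First I would establish the equivalence of the two formulations (i) and (ii): by the factored form of the nuclear norm, $\min_{\bX=\bW\bH^\top}\tfrac12(\NM{\bW}{F}^2+\NM{\bH}{F}^2)=\NM{\bX}{*}$, and $\NM{\bW\bH^\top}{F}=\NM{\bX}{F}$, so the feasible set in (ii), projected through $\bX=\bW\bH^\top$, is exactly $\{\bX:\rnf(\bX)\le\beta'\}$; hence it suffices to treat (i). Second, I would use $f(\bX^{t+1})<f(\bX^t)$ together with $f(\bX^*)=\tfrac12\NM{\be}{2}^2$ and feasibility $\rnf(\bX^t)\le\beta'$ to argue that for large $t$ the iterate $\bX^t$ lies in a bounded region and, crucially, that $f(\bX^t)\le f(\bX^*)+o(1)$, so $\NM{\aff(\bX^t-\bX^*)}{2}^2$ is controlled by $\NM{\be}{2}^2$. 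Third — the main technical step — I would decompose $\bX^t-\bX^*$ into a rank-$2k$ ``head'' and a ``tail'' via the SVD-based splitting used in RIP recovery proofs (Candès-type), and bound the tail's Frobenius norm by the tail's nuclear norm, which in turn is controlled using $\rnf(\bX^t)\le\beta'$ and the geometry lemma; combining with $\delta_{2k}\le 1/3$ and the RIP lower/upper bounds on the head gives $\NM{\bX^t-\bX^*}{F}^2\le c\NM{\be}{2}^2$.

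The hard part will be step three: making the tail bound work with the nonconvex $\rnf$ rather than with the nuclear norm alone. For pure nuclear-norm minimization one exploits that the optimum has nuclear norm no larger than that of $\bX^*$; here the constraint $\rnf(\bX^t)\le\beta'$ is weaker and one only has a descent condition on $f$, not optimality, so I would need to (a) choose $\beta'$ consistently with $\rnf(\bX^*)$ (or argue $\bX^*$ is feasible), (b) convert $\rnf(\bX^t)\le\beta'$ into a bound on $\NM{(\bX^t-\bX^*)_{\mathrm{tail}}}{*}$ using the subadditivity-type inequality $\NM{\bX}{*}\le\rnf(\bX)+\NM{\bX}{F}$ and the rank budget, and (c) ensure the constant $c$ depends only on $\delta_{2k}$ and not on $t$, which forces the ``sufficiently large $t$'' qualifier to absorb the $o(1)$ slack from the monotone-but-not-exact convergence of $f(\bX^t)$ to $f(\bX^*)$. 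I expect the rest — the algebraic manipulation of the RIP inequalities on the head block — to be routine once the tail is tamed.
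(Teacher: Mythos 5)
There is a genuine gap in your main technical step, and your route is also much heavier than what the paper actually does. The paper's proof never performs a Cand\`es-type head/tail decomposition: it observes that, through the factored parametrization $\bX^t=\bW^t(\bH^t)^\top$ with $\bW^t\in\R^{m\times k}$ and $k$ chosen at most $k^*$, every iterate has rank at most $k^*$, so $\bX^t-\bX^*$ has rank at most $2k^*$ and the RIP lower bound applies to it \emph{directly}. The numerator $\NM{\aff(\bX^t-\bX^*)}{2}^2$ is then bounded by expanding $\bb-\aff(\bX^*)=\be$ and Cauchy--Schwarz to get $2(1+1/c_1)^2 f(\bX^t)$, and the monotone descent assumption is used only to say that for large $t$ the value $f(\bX^t)$ lies within a constant factor of $\|\be\|_2^2/2$ (the paper introduces a constant $c_1$ with $\lim_t f(\bX^t)=c_1^2\|\be\|_2^2/2$ and absorbs it into $c$). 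The constraint $\rnf(\bX)\le\beta'$ plays essentially no quantitative role in the bound; the rank control does all the work.

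Your step three, by contrast, would not close. The standard tail bound in RIP recovery proofs needs a comparison of the form $\NM{\bX^t}{*}\le\NM{\bX^*}{*}$ (from optimality of a nuclear-norm minimizer), which induces the cone condition on the error. The feasibility condition $\rnf(\bX^t)\le\beta'$ gives no such comparison: your proposed conversion $\NM{\bX}{*}\le\rnf(\bX)+\NM{\bX}{F}$ only yields $\NM{\bX^t}{*}\le\beta'+\NM{\bX^t}{F}$, so the tail's nuclear norm picks up an additive $\beta'$ term and the final estimate becomes of the form $c\|\be\|_2^2+c'(\beta')^2$ rather than $c\|\be\|_2^2$. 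Separately, your step two asserts $f(\bX^t)\le f(\bX^*)+o(1)$, i.e.\ convergence of the objective value to the global optimum; a descent condition on a nonconvex problem does not give this, and the theorem as stated only needs (and the paper only uses) that $f(\bX^t)$ stabilizes at some constant multiple of $\|\be\|_2^2/2$. If you replace the head/tail machinery with the explicit rank bound coming from the factorization dimension $k$, the rest of your outline (RIP lower bound plus the bound on $\NM{\aff(\bX^t-\bX^*)}{2}$ via $f(\bX^t)$) collapses to the paper's three-line argument.
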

Existing theoretical analysis \cite{gui2016towards} applies for adaptive nonconvex regularizer with separable penalty on individual singular values. Hen it 
does not apply for NNFN regualrizer which is not separable.

\subsection{Convergence Guarantee}

The proximal algorithm for NNFN regularizer is guaranteed to converge to critical points \cite{bolte2014proximal}.
As for the non-smooth factored NNFN regularizer, 
the following guarantee convergence to a critical point of
\eqref{eq:fnnfn}, which can be used to form a critical point of the original low-rank
matrix completion problem in \eqref{eq:mc_reg}.
\begin{theorem}[Convergence Guarantee]\label{pr:convergence}
	Assume that $k$ is sufficiently large and $\bW^t (\bH^t)^{\top} \not= \mathbf{0}$ during the iterations, 	
	gradient descent on \eqref{eq:fnnfn} can converge to a critical point $(\bar{\bW},\bar{\bH})$.  
	Moreover, the obtained $\bar{\bX}=\bar{\bW}\bar{\bH}^\top$ is also a critical point of \eqref{eq:mc_reg},
	with $r$ being the NNFN regularizer.
\end{theorem}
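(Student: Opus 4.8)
The plan is to split the statement into two parts: (I) gradient descent on \eqref{eq:fnnfn} produces a convergent sequence whose limit $(\bar{\bW},\bar{\bH})$ is a critical point of $F$; and (II) the lifted matrix $\bar{\bX}=\bar{\bW}\bar{\bH}^\top$ satisfies the first-order stationarity condition of \eqref{eq:mc_reg} with $r=\rnf$. Throughout I use that $k$ is large enough for the lifting and that every iterate satisfies $\bW^t(\bH^t)^\top\neq\mathbf{0}$, which keeps the trajectory inside the open set $\mathcal{R}=\{(\bW,\bH):\bW\bH^\top\neq\mathbf{0}\}$ on which $F$ is smooth.

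For part (I), first observe that on $\mathcal{R}$ the objective $F$ is real-analytic: $f(\bW\bH^\top)$ and $\tfrac{\lambda}{2}(\NM{\bW}{F}^2+\NM{\bH}{F}^2)$ are polynomials in the entries, while $\NM{\bW\bH^\top}{F}=\sqrt{\Tr{(\bW^\top\bW)(\bH^\top\bH)}}$ is analytic wherever its radicand is positive, i.e.\ on $\mathcal{R}$. Hence $\nabla F$ is Lipschitz on every compact subset of $\mathcal{R}$, so with a sufficiently small (or line-searched) step size gradient descent has the sufficient-decrease property $F(\bW^{t+1},\bH^{t+1})\le F(\bW^t,\bH^t)-c_1(\NM{\nabla_{\bW}F}{F}^2+\NM{\nabla_{\bH}F}{F}^2)$. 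Since $f\ge 0$ and $\tfrac{\lambda}{2}(\NM{\bW}{F}^2+\NM{\bH}{F}^2)-\lambda\NM{\bW\bH^\top}{F}\ge\tfrac{\lambda}{2}(\NM{\bW}{F}-\NM{\bH}{F})^2\ge 0$, the values $F(\bW^t,\bH^t)$ are non-increasing and bounded below, hence convergent. Assuming the iterates stay bounded (see below) they lie in a compact subset of $\mathcal{R}$, on which the real-analytic $F$ obeys the Kurdyka--{\L}ojasiewicz inequality; combining this with the sufficient-decrease estimate and the gradient (relative-error) bound, the abstract convergence theorem for descent methods on KL functions gives that $\{(\bW^t,\bH^t)\}$ has finite length and converges to a single point $(\bar{\bW},\bar{\bH})$ with $\nabla_{\bW}F(\bar{\bW},\bar{\bH})=\nabla_{\bH}F(\bar{\bW},\bar{\bH})=\mathbf{0}$.

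For part (II), set $\bar{\bX}=\bar{\bW}\bar{\bH}^\top\neq\mathbf{0}$ and $\mathbf{M}=\tfrac{\lambda}{\NM{\bar{\bX}}{F}}\bar{\bX}-\nabla f(\bar{\bX})$. Inserting $\nabla_{\bW}F=\nabla_{\bH}F=\mathbf{0}$ into \eqref{eq:update-w}--\eqref{eq:update-h} and using $c\bW(\bH^\top\bH)=c(\bW\bH^\top)\bH$ reduces the stationarity equations to $\lambda\bar{\bW}=\mathbf{M}\bar{\bH}$ and $\lambda\bar{\bH}=\mathbf{M}^\top\bar{\bW}$; chaining these gives $\lambda^2\bar{\bW}=\mathbf{M}\mathbf{M}^\top\bar{\bW}$ and $\lambda^2\bar{\bH}=\mathbf{M}^\top\mathbf{M}\bar{\bH}$, so the columns of $\bar{\bW}$ (resp.\ $\bar{\bH}$) lie in the eigenspace of $\mathbf{M}\mathbf{M}^\top$ (resp.\ $\mathbf{M}^\top\mathbf{M}$) for eigenvalue $\lambda^2$. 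Matching this against the compact SVD $\bar{\bX}=\bU\Diag{\bsig(\bar{\bX})}\bV^\top$, whose column and row spaces coincide with those of $\bar{\bW}$ and $\bar{\bH}$, one reads off $\mathbf{M}=\lambda(\bU\bV^\top+\mathbf{N})$ with $\bU^\top\mathbf{N}=\mathbf{N}\bV=\mathbf{0}$ and $\NM{\mathbf{N}}{2}\le 1$, i.e.\ $\mathbf{M}\in\lambda\,\partial\NM{\bar{\bX}}{*}$; ``$k$ sufficiently large'' is what makes this hold even when $\rank(\bar{\bX})<k$, the extra columns of $\bar{\bW},\bar{\bH}$ carrying eigenvalue $0$ and being consistent with the same spectral description of $\mathbf{M}$. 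Finally, since $\NM{\cdot}{F}$ is differentiable at $\bar{\bX}\neq\mathbf{0}$ with gradient $\bar{\bX}/\NM{\bar{\bX}}{F}$, the limiting subdifferential of $\rnf$ there is $\partial\rnf(\bar{\bX})=\partial\NM{\bar{\bX}}{*}-\bar{\bX}/\NM{\bar{\bX}}{F}$, so $\mathbf{M}\in\lambda\,\partial\NM{\bar{\bX}}{*}$ rearranges exactly to $\mathbf{0}\in\nabla f(\bar{\bX})+\lambda\,\partial\rnf(\bar{\bX})$, which is the criticality condition for \eqref{eq:mc_reg}.

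The main obstacle is the boundedness of the iterates in part (I). Unlike the factored nuclear norm, the factored NNFN regularizer $\tfrac{1}{2}(\NM{\bW}{F}^2+\NM{\bH}{F}^2)-\NM{\bW\bH^\top}{F}$ is not coercive — it vanishes on the unbounded ``balanced'' set $\{\NM{\bW}{F}=\NM{\bH}{F},\ \bW\bH^\top=\mathbf{0}\}$ — so sublevel sets of $F$ need not be compact and one cannot invoke coercivity as in Srebro-type arguments. Handling this requires exploiting that gradient descent drives the factors toward balance (controlling $\NM{\bW^t}{F}^2+\NM{\bH^t}{F}^2-2\NM{\bW^t(\bH^t)^\top}{F}$) together with the fact that the matrix-completion loss plus the squared-norm term still confine the trajectory; alternatively, boundedness (and that the iterates stay quantitatively away from $\bW\bH^\top=\mathbf{0}$, which is what the hypothesis ``$\bW^t(\bH^t)^\top\neq\mathbf{0}$'' must be read to guarantee so that $\nabla F$ remains Lipschitz along the path) can be absorbed into the hypotheses. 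A secondary, more technical point is the eigenspace-matching step of part (II): verifying the inclusion $\mathbf{M}\in\lambda\,\partial\NM{\bar{\bX}}{*}$ cleanly, in particular in the rank-deficient case, which is exactly where ``$k$ sufficiently large'' is needed.
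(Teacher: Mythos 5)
Your overall strategy is sound but diverges from the paper on both halves, and in each case you attempt something strictly stronger than what the paper actually proves. For part (I) the paper does not use KL theory at all: it invokes only the sufficient-decrease inequality for an $L$-smooth function, telescopes it over iterations, and uses boundedness from below together with an \emph{asserted} coercivity (``As assumed, $\lim_{\NM{\bW}{F}\to\infty}F(\bW,\cdot)=\infty$'') to conclude that limit points exist and that the gradients vanish there --- i.e.\ only subsequential convergence to a critical point, which is all the theorem claims. Your KL argument would buy whole-sequence convergence, but it inherits exactly the boundedness problem you flag, a problem the paper resolves only by assumption. Note also a factual slip in your coercivity discussion: on the set $\{\NM{\bW}{F}=\NM{\bH}{F},\ \bW\bH^\top=\mathbf{0}\}$ the factored regularizer equals $\frac{1}{2}(\NM{\bW}{F}^2+\NM{\bH}{F}^2)$, which is coercive there, not zero; it vanishes instead along balanced, rank-one-aligned directions where $\NM{\bW\bH^\top}{F}=\NM{\bW}{F}\NM{\bH}{F}$, and it is along such directions (when the corresponding rank-one matrix is unobserved by $\mathbf{\Omega}$) that $F$ genuinely fails to be coercive. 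Your diagnosis of the issue is right, but the witness set is wrong.

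For part (II) the paper's argument is entirely different from yours and much weaker: it only chains the inequalities $\min_{\bW,\bH}F(\bW,\bH)\ \ge\ \min_{\bX}\bigl(f(\bX)-\lambda\NM{\bX}{F}+\lambda\NM{\bX}{*}\bigr)$ via Srebro's variational form of the nuclear norm and then simply asserts the critical-point correspondence; no stationarity conditions are written down. Your route --- deriving $\lambda\bar{\bW}=\mathbf{M}\bar{\bH}$ and $\lambda\bar{\bH}=\mathbf{M}^\top\bar{\bW}$ and trying to place $\mathbf{M}$ in $\lambda\,\partial\NM{\bar{\bX}}{*}$ --- is the standard Burer--Monteiro-style argument and is the right way to make the claim rigorous, but the step you label as merely technical is a genuine gap, not a formality: first-order stationarity only forces the singular values of $\mathbf{M}$ restricted to the column and row spaces of $\bar{\bW},\bar{\bH}$ to equal $\lambda$; it gives no control of $\NM{\mathbf{N}}{2}$ on the orthogonal complement. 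Establishing $\NM{\mathbf{N}}{2}\le 1$ requires second-order information (e.g.\ that $(\bar{\bW},\bar{\bH})$ is a rank-deficient second-order stationary point), which neither your proposal nor the paper supplies.
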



\section{Experiments}
\label{sec:expts} 
Here, we
perform 
matrix completion
experiments on 
both 
synthetic and 
real-world data sets,
using a PC with Intel i7 3.6GHz CPU and 48GB memory.
Experiments
are repeated five times,
and the  averaged 
performance are 
reported. 

\subsection{Experimental Settings}
\subsubsection{Baselines}
The proposed \textbf{NNFN}\footnote{Our codes are available at \url{https://github.com/tata1661/NNFN}} regularizer solved by proximal algorithm, 
and its
scalable variant \textbf{factored NNFN} solved by gradient descent, 
are compared with the  representative regularizers optimized by their respective state-of-the-art solvers as listed in Table~\ref{tab:cost}. 
For all methods that we compare in the experiments, we use public codes unless they are not available. 
	\begin{itemize}
	\item Low-rank regularizers include: 
	(i) \textbf{nuclear} norm\footnote{\url{https://cran.r-project.org/src/contrib/softImpute_1.4.tar.gz}, we rewrite it in MATLAB} \cite{hastie2015matrix}  
	\cite{candes2009exact}; 
	(ii)
	\textbf{truncated $\lonetwo$} regularizer\footnote{\url{https://sites.google.com/site/louyifei/TL12-webcode.zip?attredirects=0&d=1}}  
	\cite{ma2017truncated};  
	(iii) adaptive nonconvex low-rank regularizers of the form \eqref{eq:noncvx}\footnote{\url{https://github.com/quanmingyao/FaNCL}},
	including the \textbf{capped-$\ell_1$} penalty \cite{zhang2010analysis},
	\textbf{LSP} \cite{candes2008enhancing}; and
	\textbf{MCP} \cite{zhang2010nearly}. 
	
	\item Factored regularizers include 
	(i) \textbf{factored nuclear} norm\footnote{We implement it on our own.} \cite{srebro2005maximum}; 
	(ii) \textbf{BPMF}\footnote{\url{https://www.cs.toronto.edu/~rsalakhu/BPMF.html}} \cite{mnih2008probabilistic}; 
	and (iii) \textbf{factored GSR}\footnote{\url{https://github.com/udellgroup/Codes-of-FGSR-for-effecient-low-rank-matrix-recovery}} \cite{fan2019factor}. 
\end{itemize}

\begin{table*}[htb]
	\centering
	\caption{Performance on the synthetic data $\bO\in\R^{m\times m}$ with different $m$'s.  
		For each data set, its sparsity ratio is shown in brackets. 
		The best and comparable results (according to the pairwise t-test with 95\% confidence) are highlighted in bold.
	}
	\begin{tabular}{c|cc|cc|cc}
		\hline
		&     \multicolumn{2}{c|}{$m=500$ (12.43\%)}
		&\multicolumn{2}{c|}{$m=1000$ (6.91\%)}&      \multicolumn{2}{c}{$m=2000$ (3.80\%)}    \\
		&          testing NMSE &         time (s)          &         testing NMSE           &
		time (s)          &         testing NMSE          &      time (s)          \\ \hline
		nuclear&     0.0436$\pm$0.0003     &     2.1$\pm$0.2
		&   0.0375$\pm$0.0003      &   4.2$\pm$1.0     &      0.0333$\pm$0.0001  &   40.9$\pm$7.2   \\ \hline
		factored nuclear    &     0.0246$\pm$0.0003      &    \textbf{0.04$\pm$0.01}      &    0.0218$\pm$0.0004      &    \textbf{0.08$\pm$0.02}      &     0.0198$\pm$0.0001    &   \textbf{0.4$\pm$0.2}     \\\hline  
		BPMF		&0.0234$\pm$0.0005 &3.2$\pm$0.4&0.0203$\pm$0.0005&5.8$\pm$0.9
		&0.0188$\pm$0.0001&48.3$\pm$5.9\\\hline
		factored GSR&     0.0219$\pm$0.0003     &     0.5$\pm$0.1
		&   0.0197$\pm$0.0004      &   4.2$\pm$0.2     &      0.0185$\pm$0.0001  &  6.7$\pm$0.4   \\ \hline
		truncated $\lonetwo$ &\textbf{0.0196$\pm$0.0003} &695.8$\pm$19.2&\textbf{0.0182$\pm$0.0004}&1083.2$\pm$40.78&\textbf{0.0177$\pm$0.0001} &3954.1$\pm$98.7\\\hline
		capped-$\ell_1$
		& \textbf{0.0197$\pm$0.0003}  &    0.8$\pm$0.1      &     \textbf{0.0183$\pm$0.0003}   &    5.4$\pm$0.1     & \textbf{0.0178$\pm$0.0001}  & 36.0$\pm$3.4      \\ \hline
		LSP     
		& \textbf{0.0197$\pm$0.0003}  &     0.8$\pm$0.1      & \textbf{0.0183$\pm$0.0004}  &   5.1$\pm$0.1      &\textbf{0.0177$\pm$0.0001}  &  35.1$\pm$2.1     \\ \hline
		MCP
		& \textbf{0.0196$\pm$0.0003}  &   0.7$\pm$0.1      & \textbf{0.0182$\pm$0.0003}  &    4.1$\pm$0.2     & \textbf{0.0178$\pm$0.0001}  &    40.6$\pm$3.6    \\ \hline
		NNFN &\textbf{0.0196$\pm$0.0003} &2.1$\pm$0.2&\textbf{0.0182$\pm$0.0003} &7.7$\pm$0.6
		&\textbf{0.0177$\pm$0.0001} &43.1$\pm$2.3\\\hline
		\multirow{1}{*}{factored NNFN}&\textbf{0.0196$\pm$0.0003} &\textbf{0.04$\pm$0.01}&\textbf{0.0182$\pm$0.0003} &\textbf{0.08$\pm$0.02} &\textbf{0.0177$\pm$0.0001}&\textbf{0.3$\pm$0.1}\\\hline
	\end{tabular}
	\label{tab:syn_expts}
\end{table*}
\begin{figure*}[htp]
	\centering
	\subfigure[$m=500$.]{\includegraphics[width =
		0.246\textwidth]{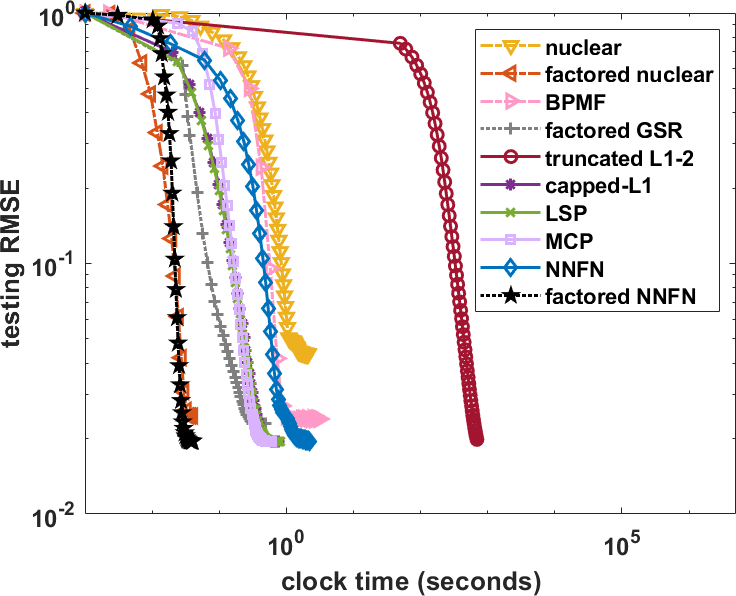}}
	\hspace{40px}
	\subfigure[$m=1000$.]{\includegraphics[width =
		0.246\textwidth]{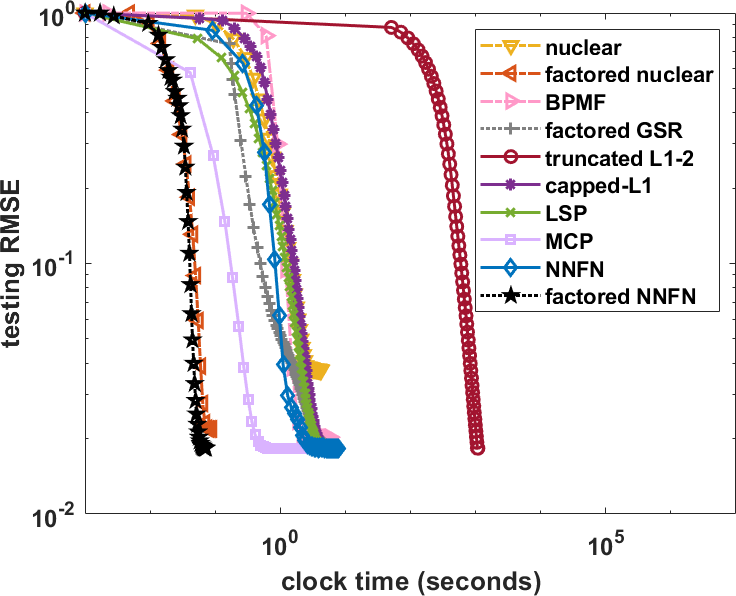}}
	\hspace{40px}
	\subfigure[$m=2000$.\label{fig:syn_2000}]{\includegraphics[width =
		0.246\textwidth]{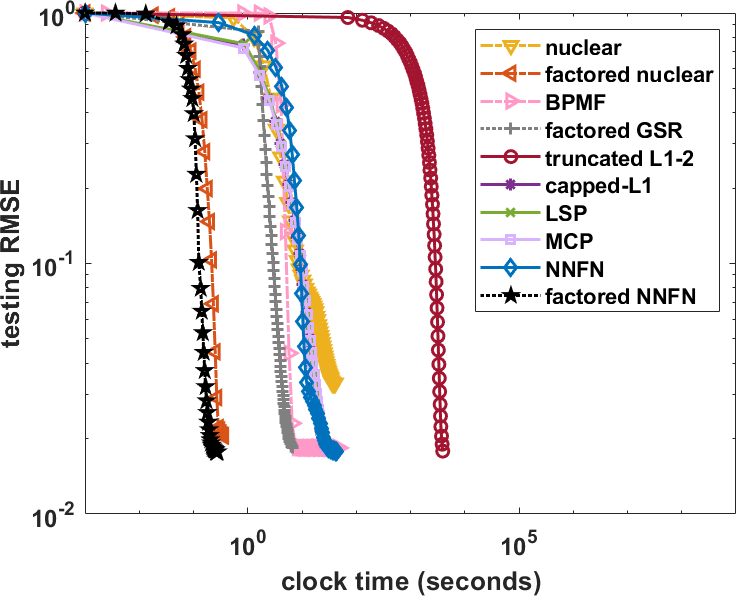}}
	\caption{Testing NMSE versus clock time on the synthetic data sets.
	}
	\label{fig:syn_nmse}
\end{figure*}

Note that learning with factored regularizers solves \eqref{eq:mc_fact}, which reduces to matrix factorization when $\mu=0$. Therefore, we do not additionally compare with matrix factorization methods \cite{srebro2005maximum,wen2012solving,tu2016low}.   

All the algorithms are implemented in MATLAB (with sparse 
operations written in C as MEX functions). 
Each algorithm is stopped when the relative difference
between objective values in consecutive iterations is smaller than 
${10}^{-4}$.  
All hyperparameters including stepsize, $\lambda$, $k$, ${r}_t$ and hyperparameters of baseline methods are tuned by grid search using the validation set. 
Specifically, 
$\lambda$ in (\ref{eq:mc_reg}) is chosen from $[10^{-3},10^2]$, ${r}_t$ and $k$ is a integer chosen from $[1,\min(m,n)]$, and stepsize is chosen from $[10^{-5},1]$.
For the  other baselines, we use 
the 
hyperparameter
ranges
as mentioned in the respective papers.

\subsubsection{Evaluation Metrics}
Given an incomplete matrix $\bO$, 
let 
$\mathbf{\Omega}^{\bot}$ record positions of the unobserved
elements (i.e., $\Omega_{ij}^{\bot}=0$ if $O_{ij}$ is observed, and $1$ otherwise),
and
$\bar{\bX}$ be the matrix recovered.
Following \cite{rao2015collaborative,yao2019large},
performance on the synthetic data
is measured by  the normalized mean squared error (NMSE) on
$\mathbf{\Omega}^{\bot}$:
$\text{NMSE} = \NM{P_{\mathbf{\Omega}^{\bot}}(\bar{\bX} - \mathbf{G})}{F} 
/ \NM{P_{\mathbf{\Omega}^{\bot}}(\mathbf{G})}{F}$, where $\mathbf{G}$ is the
ground-truth matrix.
On the real-world data sets, we use the root mean squared error (RMSE) on 
$\mathbf{\Omega}^\bot$:
$\text{RMSE} = (\NM{\mathcal{P}_{\mathbf{\Omega}^{\bot}}(\bar{\bX} -
	\bO)}{F}^2 / \NM{\mathbf{\Omega}^{\bot}}{0})^{\nicefrac{1}{2}}$. 
Besides the error, we also report the training time in seconds.

\subsection{Synthetic Data}
\label{sec:expts_syn}

First, 
$\bW,\bH \in \R^{m \times k^*}$
are generated 
with elements
sampled i.i.d. from the standard normal distribution $\mathcal{N}(0, 1)$.
We set $k^* = 5$, 
and vary $m$ in $\{500, 1000, 2000\}$.
The 
$m\times m$ 
ground-truth matrix (with
rank $k^*$)
is then constructed as $\mathbf{G} =\bW \bH^\top$.
The observed matrix $\bO$ is generated as $\bO = \mathbf{G} + \mathbf{E}$, where 
the elements of $\mathbf{E}$ are sampled from $\mathcal{N}(0, 0.1)$. 
A set of
$\NM{\mathbf{\Omega}}{0} = 2 m k^* \log(m)$ random elements
in $\bO$ are observed, where 50\% of them are randomly sampled for training, and the rest is taken as validation
set for hyperparameter tuning.
We define the sparsity ratio $s$ of the observed matrix as its fraction of observed elements 
(i.e., $s = \NM{\mathbf{\Omega}}{0}/m^2$).

\subsubsection{Performance}

Table~\ref{tab:syn_expts}
shows
the results.
As can be seen, nonconvex regularizers (including the proposed
$\rnf$)
consistently yield better recovery performance. 
Among the nonconvex regularizers, 
all of them yield comparable errors. 
Additionally, we calculate the rank of recovered matrices and find that all methods (except
the nuclear norm regularizer) can 
recover the true rank. 
As for speed,
factored NNFN allows significantly faster optimization than NNFN, which validates the efficiency of using the factored form. 
Only factored nuclear norm regularizer is comparable to factored NNFN in speed (but it is much worse in terms of recovery performance), 
and both 
are orders of magnitudes faster than the others.
Optimization with the truncated $\lonetwo$ is exceptionally slow, which is due to
the need of having 
two levels of 
DCA
and 
ADMM 
iterations. 
The convergence of testing NMSE is put in Figure~\ref{fig:syn_nmse}, which also shows 
factored NNFN always has 
the fastest 
convergence
to the lowest NMSE.

\begin{figure*}[h!]
	\centering
	{\includegraphics[width =
		0.236\textwidth]{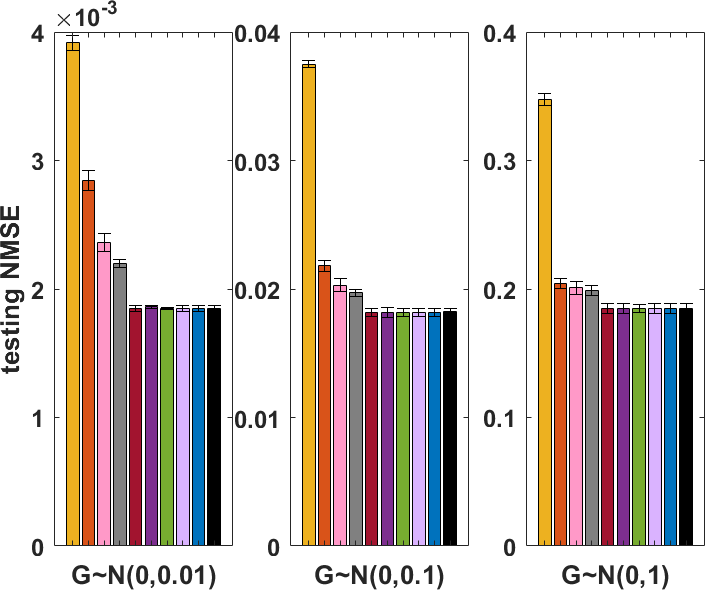}}
	\hspace{40px}
	{\includegraphics[width =
		0.246\textwidth]{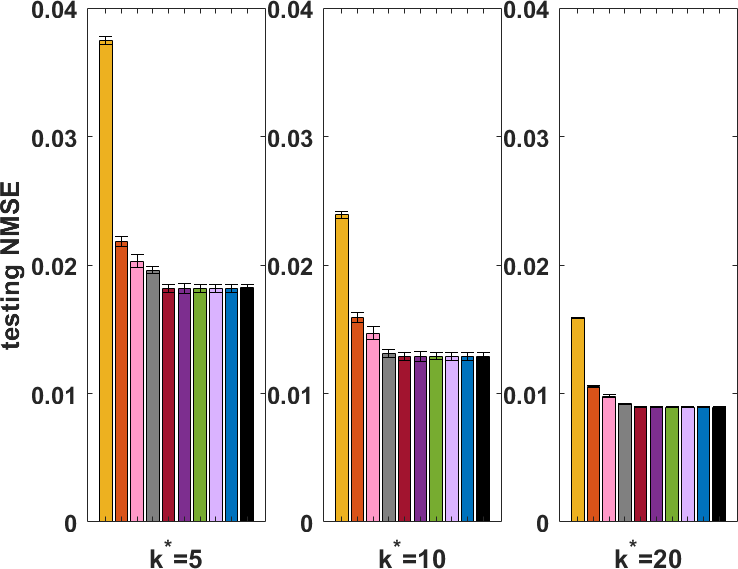}}
	\hspace{40px}
	{\includegraphics[width =
		0.246\textwidth]{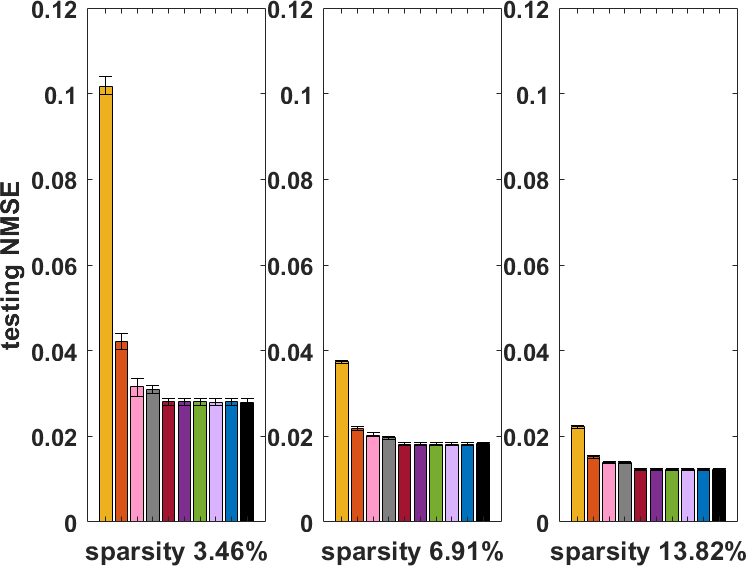}}
	\subfigure[different noise variances.]{\includegraphics[width =
		0.246\textwidth]{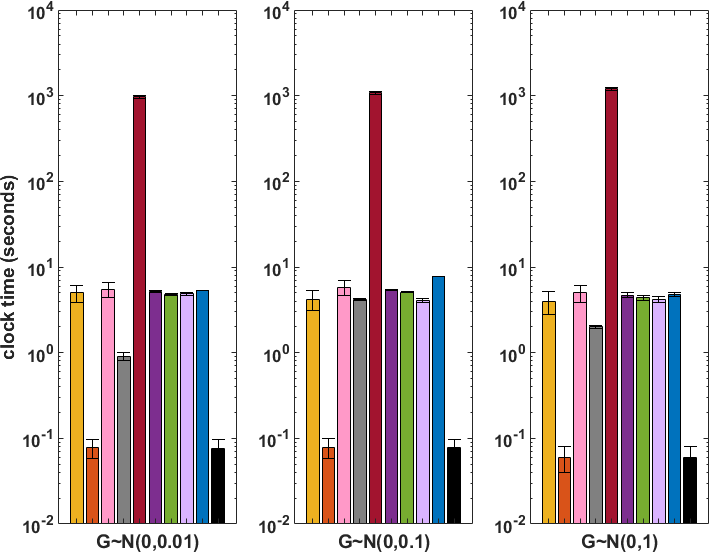}}	
	\hspace{40px}
	\subfigure[different ranks.]{\includegraphics[width =
		0.246\textwidth]{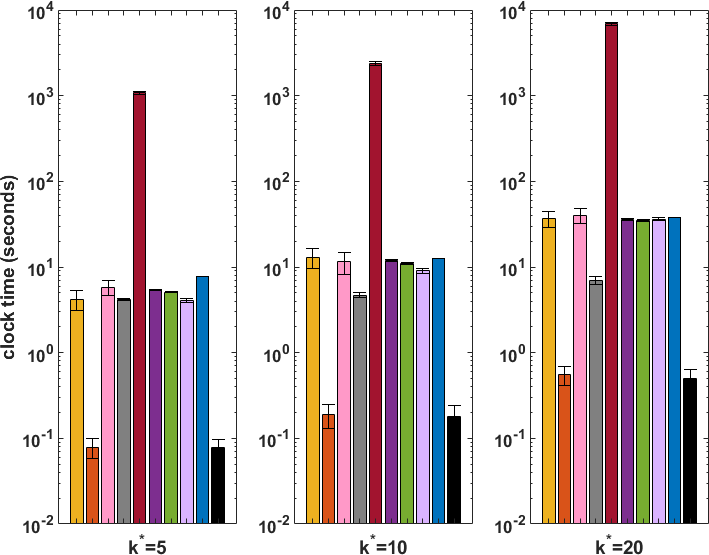}}
	\hspace{40px}
	\subfigure[different sparsity ratios.]{\includegraphics[width =
		0.246\textwidth]{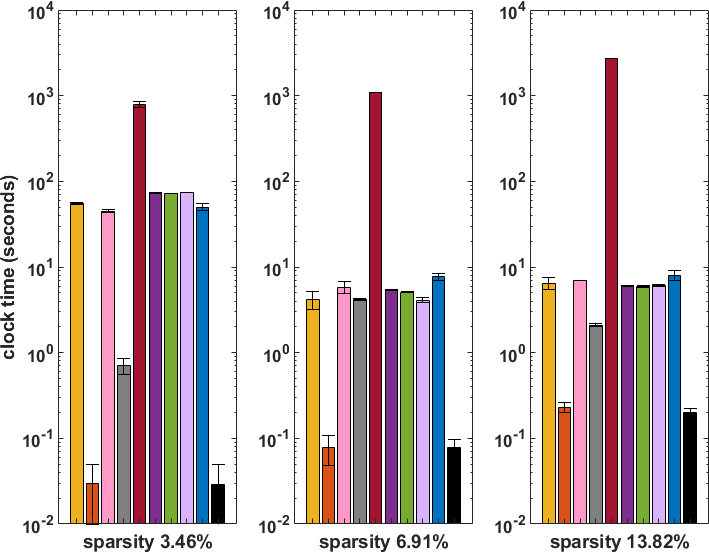}}
	\includegraphics[width=1.8\columnwidth]{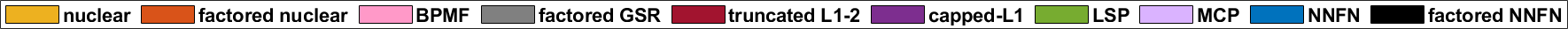}	
	\caption{Testing NMSE (first row) and clock time (second row)  with different settings on the synthetic data  ($m=1000$). 
	The default setting is $k^* = 5$,  $\mathbf{E}\sim\mathcal{N}(0, 0.1)$ and $s=6.91\%$.	For each figure, we only vary one variable while keeping the others as the default setting.}
	\label{fig:ablation_nmse}
\end{figure*}

\subsubsection{Effects of Noise, Rank and Sparsity Ratio} 
In this section, we vary (i) the 
variance of the Gaussian noise matrix $\mathbf{E}$ in the range $\{0.01, 0.1,1\}$;
(ii) the
true rank $k^*$ 
of the data in $\{5,10,20\}$; and (iii)
the sparsity ratio $s$ in $\{0.5,1,2\} \times (2mk^*\log(m)/m^2)$.  
The experiment is performed
	on the synthetic data set, with $m=1000$. 
In each trial, we 
only vary one variable while keeping the others at default, 
i.e., $k^* = 5$,  $\mathbf{E}\sim\mathcal{N}(0, 0.1)$ and $s=6.91\%$.	
Figure~\ref{fig:ablation_nmse} shows
the testing NMSE results and the timing results. 
As expected, a larger noise, 
smaller true rank, or sparser matrix lead to a
harder matrix completion problem and subsequently higher NMSE's.
However, the relative performance ranking of the various methods remain the same,
and nonconvex regularization always obtain a smaller NMSE. 
For time, 
although the exact timing results vary across different settings, consistent observation can be made: 
factored NNFN is consistently faster than the others. 

\begin{table*}[htbp]
	\centering
	\caption{Performance on the recommendation  data sets. 
		Entries marked as ``-" mean that the corresponding 
		methods cannot complete in three hours.
		The best and comparable results (according to the pairwise t-test with 95\% confidence) are highlighted in bold.
	}
	\begin{tabular}{c|cc|cc|cc}
		\hline
		&       \multicolumn{2}{c|}{\textit{MovieLens-1M}} &       \multicolumn{2}{c|}{\textit{MovieLens-10M}}& \multicolumn{2}{c}{\textit{Yahoo}}                \\
		& testing RMSE &       time (s)       & testing RMSE &       time (s)  & testing RMSE &       time (s) 
		\\ \hline
		\multirow{1}{*}{nuclear}     &     0.820$\pm$0.002      & 118.7$\pm$19.2& 0.807$\pm$0.001&821.2$\pm$27.7 &   0.721$\pm$0.001 &     1133.1$\pm$58.3     \\\hline
		\multirow{1}{*}{factored nuclear}         &    0.810$\pm$0.001&   0.6$\pm$0.1&0.795$\pm$0.001 & \textbf{41.8$\pm$7.3}&0.710$\pm$0.008&     533.9$\pm$25.7\\\hline
		\multirow{1}{*}{BPMF}       &0.807$\pm$0.001&215.3$\pm$29.4&0.791$\pm$0.001 & 819.6$\pm$30.8&0.707$\pm$0.003&1433.5$\pm$89.2\\
		\hline
		\multirow{1}{*}{factored GSR}	&0.805$\pm$0.001&14.2$\pm$1.5& -&- &-&-\\	 \hline
		\multirow{1}{*}{truncated $\ell_{1-2}$}	&\textbf{0.797}\textbf{$\pm$0.001}&6068.4$\pm$172.0& -&- &-&-\\\hline  
		\multirow{1}{*}{capped-$\ell_1$} & 0.800$\pm$0.001&     147.9$\pm$23.3&0.787$\pm$0.001 &812.3$\pm$29.7 &0.658$\pm$0.001& 1296.8$\pm$67.3\\ \hline
		\multirow{1}{*}{LSP}      & 0.799$\pm$0.001&     149.2$\pm$23.5&0.787$\pm$0.001 &850.8$\pm$31.1 & 0.656$\pm$0.001&     1078.0$\pm$69.0 \\\hline
		\multirow{1}{*}{MCP}      & 0.801$\pm$0.001&     151.4$\pm$23.9&0.787$\pm$0.001 &849.8$\pm$31.5 &  0.678$\pm$0.001&    1108.3$\pm$41.4 \\ \hline   
		\multirow{1}{*}{NNFN}        &    \textbf{0.797}\textbf{$\pm$0.001}     & 134.2$\pm$19.6&\textbf{0.782$\pm$0.001} &834.5$\pm$29.2 &   \textbf{0.652$\pm$0.001} &     1209.7$\pm$61.2 \\\hline
		\multirow{1}{*}{factored NNFN}     &\textbf{0.797}\textbf{$\pm$0.001}&\textbf{0.5}\textbf{$\pm$0.1}& \textbf{0.782}\textbf{$\pm$0.001} &\textbf{40.0}\textbf{$\pm$5.5} &\textbf{0.652}\textbf{$\pm$0.001}&\textbf{522.5}\textbf{$\pm$21.9}\\ 
		\hline
	\end{tabular}
	\label{tab:rec_perf}
\end{table*}
\begin{figure*}[h!]
	\centering
	\subfigure[\textit{MovieLens-1M}.\label{fig:large_movielens1m}]{\includegraphics[width =
		0.250\textwidth]{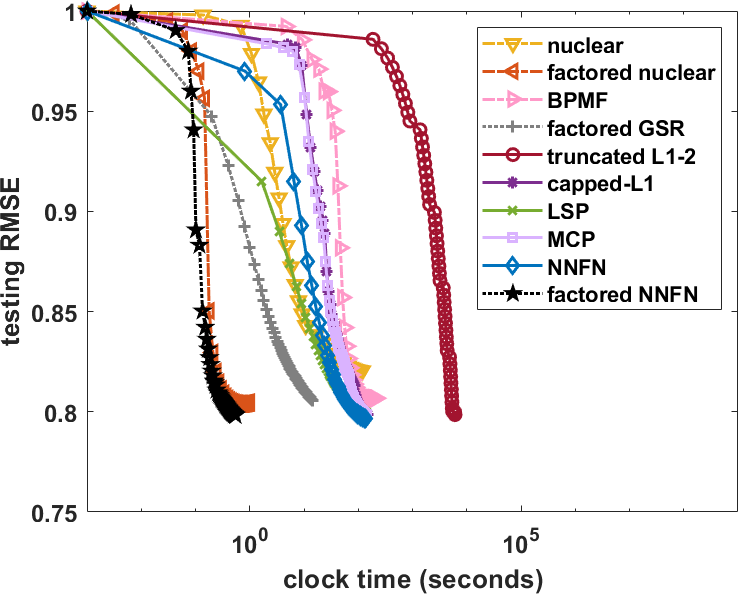}}
	\hspace{40px}
	\subfigure[\textit{MovieLens-10M}.\label{fig:large_movielens10m}]{\includegraphics[width =
		0.245\textwidth]{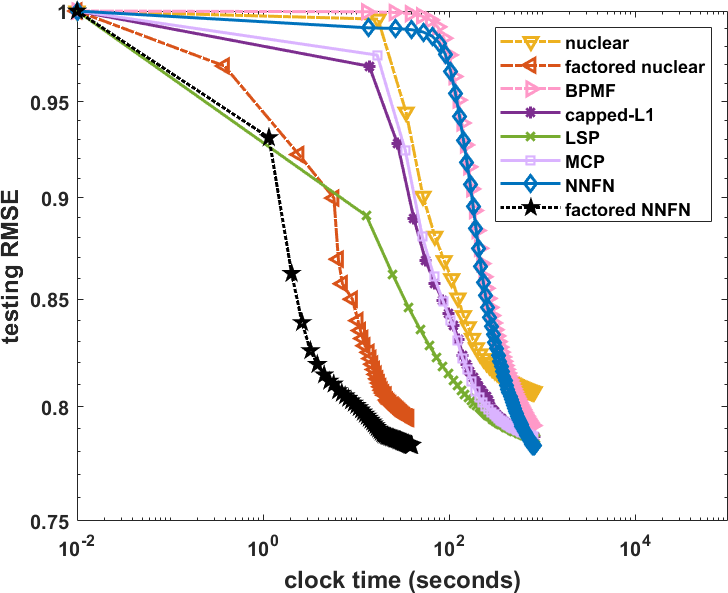}}
	\hspace{40px}
	\subfigure[\textit{Yahoo}.\label{fig:large_yahoo}]
	{\includegraphics[width = 0.245\textwidth]{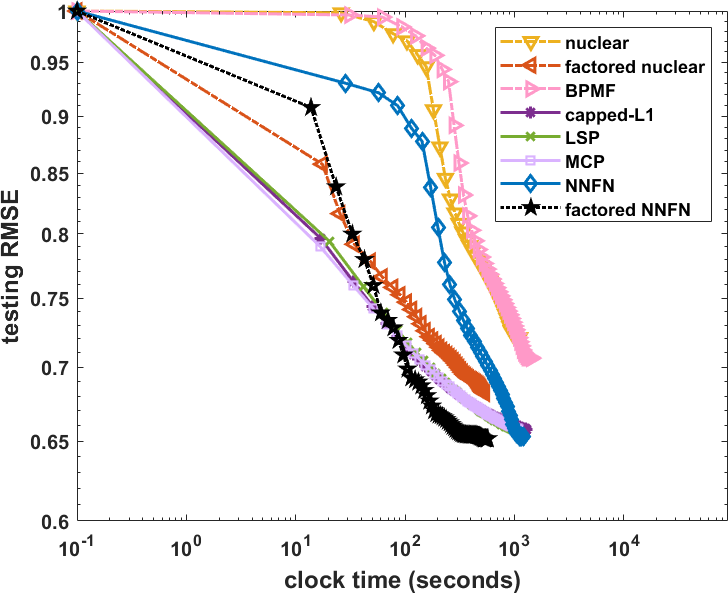}}	
	\caption{Testing RMSE versus clock time on recommendation 
		data.}
	\label{fig:rec_converge}
\end{figure*}

\begin{table*}[htbp]
	\centering
	\caption{Performance on the climate data sets. 
		The best and comparable results (according to the pairwise t-test with 95\% confidence) are highlighted in bold.
	}
	\begin{tabular}{c|cc|cc|cc|cc}
		\hline
		&       \multicolumn{4}{c|}{\textit{GAS}}     & \multicolumn{4}{c}{\textit{USHCN}}    \\
		\hline
		&       \multicolumn{2}{c|}{$\textit{CO}_{2}$}     & \multicolumn{2}{c|}{$\textit{H}_{2}$} &       \multicolumn{2}{c|}{\textit{temperature}}     & \multicolumn{2}{c}{\textit{precipitation}}    \\
		&          testing RMSE           &         time (s)        &          testing RMSE
		&         time (s)   &          testing RMSE          &         time (s)        &
		testing RMSE           &         time (s)        \\ \hline
		\multirow{1}{*}{nuclear}  &   0.584$\pm$0.005     &  1.0$\pm$0.1  &0.593$\pm$0.006 &0.8$\pm$0.1 &0.480$\pm$0.014     &  108.5$\pm$4.1   &0.828$\pm$0.020 &84.9$\pm$11.2\\ 
		\hline
		\multirow{1}{*}{factored nuclear}    &     0.565$\pm$0.006      & \textbf{0.05$\pm$0.02}     & 0.574$\pm$0.005&\textbf{0.06$\pm$0.03} & 0.483$\pm$0.016      & \textbf{6.0$\pm$1.6}     & 0.823$\pm$0.018&\textbf{13.7$\pm$1.7}  \\  \hline
		\multirow{1}{*}{BPMF}&0.552$\pm$0.005&3.2$\pm$0.3&0.554$\pm$0.005&3.4$\pm$0.4&0.464$\pm$0.012&148.1$\pm$7.7&0.819$\pm$0.015&125.1$\pm$8.1\\\hline
		\multirow{1}{*}{GRALS}&0.565$\pm$0.006&0.7$\pm$0.1&0.578$\pm$0.005&0.4$\pm$0.1&0.498$\pm$0.015&37.2$\pm$1.9&0.818$\pm$0.016&49.6$\pm$2.8\\\hline
		\multirow{1}{*}{truncated $\ell_{1-2}$}	&\textbf{0.530}\textbf{$\pm$0.007}&11.0$\pm$1.2&\textbf{0.531}\textbf{$\pm$0.005}&7.3$\pm$1.8&\textbf{0.444}\textbf{$\pm$0.013}&573.9$\pm$18.1&\textbf{0.806}\textbf{$\pm$0.014}&318.5$\pm$9.7\\	\hline 
		\multirow{1}{*}{capped-$\ell_1$} 
		& 0.533$\pm$0.003 &    0.6$\pm$0.1      & \textbf{0.531}\textbf{$\pm$0.005} &0.7$\pm$0.2 &0.450$\pm$0.014 &    108.5$\pm$10.5      &\textbf{0.806}\textbf{$\pm$0.014}&87.2$\pm$6.2  \\ \hline
		\multirow{1}{*}{LSP}      
		& 0.537$\pm$0.008  &     1.2$\pm$0.1    & 0.540$\pm$0.007& 1.3$\pm$0.2&0.448$\pm$0.010  &     133.3$\pm$7.7     & \textbf{0.806}\textbf{$\pm$0.014}& 105.8$\pm$7.4\\ \hline
		\multirow{1}{*}{MCP}       
		& \textbf{0.530}\textbf{$\pm$0.008}  &    1.0$\pm$0.1	     & 0.534$\pm$0.006& 0.5$\pm$0.1& \textbf{0.444}\textbf{$\pm$0.013}  &    92.5$\pm$6.1	     &\textbf{0.806}\textbf{$\pm$0.014}& 85.2$\pm$7.3\\  \hline    
		\multirow{1}{*}{NNFN} &\textbf{0.530}\textbf{$\pm$0.008} &0.4$\pm$0.1		&  \textbf{0.531}\textbf{$\pm$0.005}&0.5$\pm$0.1&\textbf{0.444}\textbf{$\pm$0.012} &57.1$\pm$3.3		&  \textbf{0.806}\textbf{$\pm$0.014}&66.4$\pm$5.1 \\\hline
		\multirow{1}{*}{factored NNFN}&\textbf{0.530}\textbf{$\pm$0.006}&\textbf{0.05}\textbf{$\pm$0.01}& \textbf{0.531}\textbf{$\pm$0.005} &\textbf{0.05}\textbf{$\pm$0.02}&\textbf{0.444}\textbf{$\pm$0.012}&\textbf{5.9}\textbf{$\pm$1.4}& \textbf{0.806}\textbf{$\pm$0.015} &\textbf{13.5}\textbf{$\pm$1.9}\\\hline
	\end{tabular}
	\label{tab:climate_perf}
\end{table*}
\begin{figure*}[htbp]
	\centering
	\subfigure[\textit{GAS}-$\textit{CO}_{2}$.]{\includegraphics[width =
		0.246\textwidth]{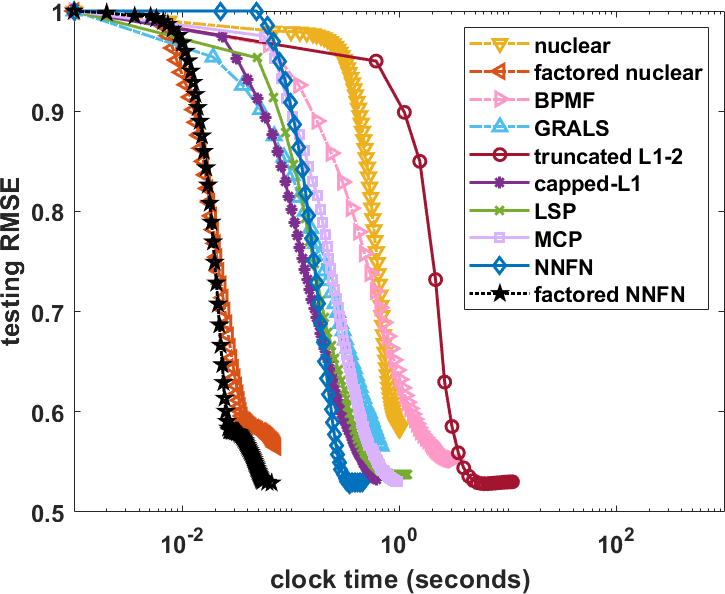}}
	\subfigure[\textit{GAS}-$\textit{H}_{2}$.]{\includegraphics[width =
		0.246\textwidth]{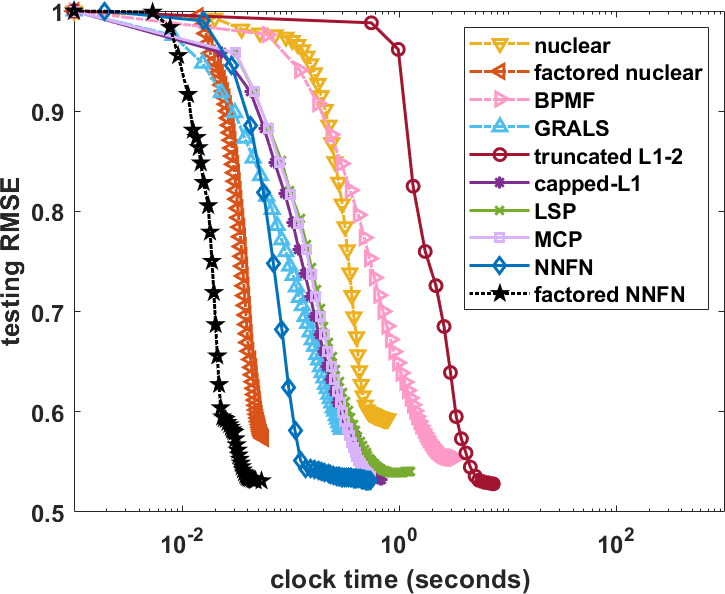}}
	\subfigure[\textit{USHCN}-\textit{temperature}.]{\includegraphics[width =
		0.246\textwidth]{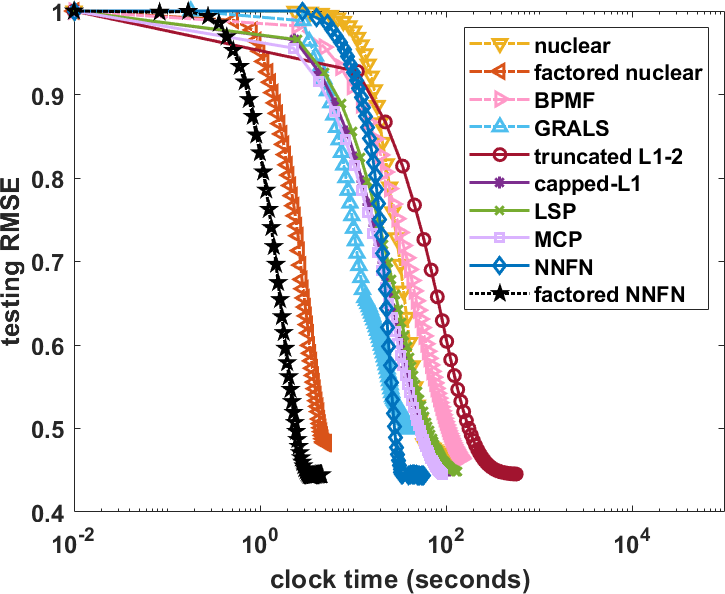}}
	\subfigure[\textit{USHCN}-\textit{precipitation}.]{\includegraphics[width =
		0.246\textwidth]{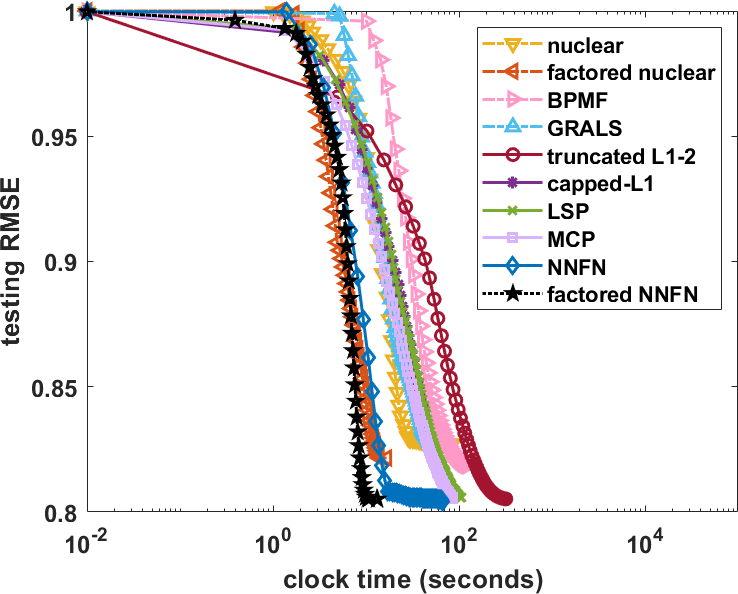}}	
	\caption{Testing RMSE versus clock time on climate data.}
	\label{fig:climate_converge}
\end{figure*}

\subsection{Recommendation Data} 
In this section, experiments are performed on the 
popularly used benchmark recommendation data sets: 
\textit{MovieLens-1M} data set \cite{harper2015movielens} (of size $6,040\times 3,449$),  
\textit{MovieLens-10M} data set\footnote{\url{http://grouplens.org/datasets/movielens/}} \cite{harper2015movielens} (of size $69,878\times10,677$) 
and \textit{Yahoo}\footnote{\url{http://webscope.sandbox.yahoo.com/catalog.php?datatype=c}} \cite{koren2009matrix} data set (of size $249,012\times 296,111$).
We uniformly sample 50\% of the ratings as observed for training, 25\% for
validation (hyperparameter tuning) and the rest for testing. 
Optimization with the truncated $\lonetwo$ cannot converge in three hours on \textit{MovieLens-10M} and \textit{Yahoo}, while 
Factored GSR runs out of memory on \textit{MovieLens-10M} and \textit{Yahoo} as it requires full matrices.  
Thus, their results are not reported.

Table~\ref{tab:rec_perf} shows the results. 
As can be seen, nonconvex regularizers obtain the best recovery performance than the other methods. Among them, factored NNFN is again the fastest.
Figure~\ref{fig:rec_converge} shows
convergence of the testing RMSE.
Consistent observation can be made, factored NNFN always obtains the best performance while being the fastest.

\subsection{Climate Data}  
Additionally, we evaluated the proposed method on climate record data sets.
The \textit{GAS}\footnote{\url{https://viterbi-web.usc.edu/~liu32/data/NA-1990-2002-Monthly.csv}} 
and \textit{USHCN}\footnote{\url{http://www.ncdc.noaa.gov/oa/climate/research/ushcn}} data sets from
\cite{bahadori2014fast} are used. 
\textit{GAS} contains monthly observations for the green gas
components from January 1990 to December 2001, of which we use $\textit{CO}_{2}$ and $\textit{H}_{2}$.
\textit{USHCN} contains monthly temperature
and precipitation readings from January 1919 to November 2019.
For these two data sets, 
some rows (which correspond to locations) of the observed matrix are completely missing. 
The task is to predict 
climate observations for locations that do not have any records.
Following \cite{bahadori2014fast}, we normalize the data to zero mean and unit variance, 
then randomly sample 10\% of the locations for training, 
another 10\% for validation,
and the rest
for testing.
To allow generalization to these completely unknown locations,
we follow \cite{bahadori2014fast} and
add a graph Laplacian regularizer to
\eqref{eq:mc_reg}. 
Specifically, the $m$ locations are represented as nodes on a graph. 
The affinity matrix $\bA=[A_{ij}]\in\R^{m\times m}$, which contains pairwise node
similarities, is computed as
$A_{ij}=\exp(-2b(i,j))$, where $b(i,j)$ is the Haversine
distance between locations $i$ and $j$.
The graph Laplacian regularizer is then defined as $a(\bX)= \Tr{\bX^\top(\bD-\bA)
	\bX}$,
where $D_{ii}=\sum_j A_{ij}$.
For the factored models (factored nuclear norm, BPMF and factored NNFN), we write $a(\bX)$ as $a(\bW\bH^\top)$. 
Additionally, we compare with graph regularized alternating least squares (GRALS) \cite{rao2015collaborative}, which optimizes for factored nuclear norm with $a(\bW)$.  

Results are shown in Table~\ref{tab:climate_perf}. 
They are consistent with the observations in the previous experiments.
In terms of recovery performance, 
all the nonconvex regularizers 
(including NNFN and factored NNFN)
have comparable performance and 
obtain the lowest testing RMSE.
In terms of speed, factored NNFN and factored nuclear norm are again the fastest, and this  
speed advantage is 
particularly apparent on the larger \textit{USHCN} data set. 
Figure~\ref{fig:climate_converge} shows the convergence of RMSE. 
As shown, nonconvex regularizers generally obtain better testing RMSEs.
Among them, 
factored NNFN is the fastest in convergence.
This again validates the
efficiency and effectiveness of factored NNFN.


\section{Conclusion}

We propose a scalable, adaptive and sound nonconvex regularizer for low-rank matrix learning. 
This regularizer can adaptively penalize singular values as common nonconvex regularizers. 
Further, 
we discover that learning with its factored form can be 
optimized by general solvers such as gradient-based method. We provide theoretical analysis for  recovery and convergence guarantee. 
Extensive experiments on matrix completion problem 
show that
the proposed algorithm achieves state-of-the-art recovery performance, 
while being the fastest among existing low-rank convex / nonconvex regularization and factored regularization methods. 
In sum, the proposed method can be useful to solve many large-scale matrix learning problems in the real world.

{
	\bibliographystyle{ACM-Reference-Format}
	\bibliography{low-rank-full-name}
}

\appendix

\section{Proof}
\label{app:proof}

\subsection{Proposition~\ref{pr:property}}
\label{app:pr_property}

\begin{lemma}\label{lem:app1} 
	When 
	Let $r(\bX)$ be a nonconvex low-rank regularizer of the form \eqref{eq:noncvx}, and 
	$\hat{r}(\alpha)$ is defined as in Table~5.  
	As analyzed in \cite{lu2015generalized,yao2019large}, $\hat{r}$ is a nonlinear, 
	concave and non-decreasing function for $\alpha \ge 0$
	with $\hat{r}(0) = 0$, 
	and
	\begin{align}\label{eq:pz}
	p = \arg\min_{x} \frac{1}{2} (x - z)^2 + \lambda \hat{r}(| x |).
	\end{align}
	Then,
	when $z \ge 0$, 
	we have
	\begin{itemize}
		\item $0 \le p \le z$,
		
		\item $z_1 - p(z_1) \le z_2 - p_2$ for $z_1 \ge z_2$.
	\end{itemize}
\end{lemma}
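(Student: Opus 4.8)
The plan is to prove the two bullet points separately, working directly with the one-dimensional problem \eqref{eq:pz}; write $\phi(x)=\frac12(x-z)^2+\lambda\hat r(|x|)$ for its objective. For the first bullet ($0\le p\le z$ when $z\ge0$), I would first show a minimizer can be taken in $[0,\infty)$: for $x<0$ and $z\ge0$ we have $\hat r(|-x|)=\hat r(|x|)$ and $|(-x)-z|=\big||x|-z\big|\le|x|+z=|x-z|$, hence $\phi(-x)\le\phi(x)$, so $p\ge0$. Next, for $x>z\ge0$, non-decreasingness of $\hat r$ gives $\hat r(x)\ge\hat r(z)$ while $\frac12(x-z)^2>0=\frac12(z-z)^2$, so $\phi(x)>\phi(z)$ and the minimizer cannot exceed $z$. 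Together this gives $0\le p\le z$.

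For the adaptivity bullet I restrict to $x\in[0,z]$, where $\hat r$ is concave and non-decreasing, and work with $\phi_z(x)=\frac12(x-z)^2+\lambda\hat r(x)$; let $p(z)$ be its smallest minimizer. The argument rests on three facts. (a) Monotonicity of the scalar proximal map: $z_1\ge z_2$ implies $p(z_1)\ge p(z_2)$. This follows from an exchange argument: for $a=p(z_1)$, $b=p(z_2)$ the $\hat r$ terms cancel and $[\phi_{z_1}(a)+\phi_{z_2}(b)]-[\phi_{z_1}(b)+\phi_{z_2}(a)]=(b-a)(z_1-z_2)$, so $a<b$ with $z_1>z_2$ would contradict minimality at $z_1$ and $z_2$. (b) The optimality condition $0\in p(z)-z+\lambda\,\partial\hat r(p(z))$; at any $z$ with $p(z)>0$ where $\hat r$ is differentiable, this yields the shrinkage identity $z-p(z)=\lambda\,\hat r'(p(z))$. (c) Concavity of $\hat r$, so $\hat r'$ is non-increasing. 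Composing (a) and (c) through (b), on the range of inputs where the shrinkage is active the map $z\mapsto z-p(z)=\lambda\,\hat r'(p(z))$ is non-increasing; combined with the two extreme regimes — small $z$, where $p(z)=0$, and large $z$, where $\hat r$ has become flat so $p(z)=z$ and the shrinkage vanishes — this is meant to yield $z_1-p(z_1)\le z_2-p(z_2)$ for $z_1\ge z_2\ge0$. The ``strict for at least one index'' clause of Proposition~\ref{pr:property} is then inherited from genuine nonlinearity: $\hat r$ is strictly concave on a subinterval, so $\hat r'$ strictly decreases there and the shrinkage strictly decreases across the corresponding inputs.

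I expect the main obstacle to be the non-smoothness of $\hat r$ together with the fact that, since $\hat r$ is concave, the scalar proximal map $p(\cdot)$ can be set-valued and discontinuous, so the clean first-order argument in (b) applies only where $\hat r$ is differentiable and $p(z)>0$. The kink of $\hat r$ at the origin (which governs the soft-threshold regime $p(z)=0$), the corner where capped-$\ell_1$ and MCP become flat, and any jump of $p(\cdot)$ all need to be handled by hand. I would therefore close the argument by a short case analysis over the three explicit forms in Table~\ref{tab:reg_form}, for each of which $p(z)$ is available in closed form, verifying both bullet points directly on the finitely many pieces; this also confirms the monotonicity claims (a)--(c) used above and pins down exactly where the strict inequality occurs.
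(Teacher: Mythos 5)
Your treatment of the first bullet matches the paper's: the paper dismisses $p\ge 0$ as ``obvious'' and proves $p\le z$ by exactly your contradiction argument, so your reflection step for $p\ge 0$ is a small but welcome tightening. For the second bullet you use the same core mechanism as the paper --- the stationarity identity $z-p=\lambda\,\partial\hat r(p)$ combined with concavity of $\hat r$ --- but you add the prox-monotonicity step (a), proved by the exchange argument, which the paper silently assumes when it asserts $\partial\hat r(p_1)\le\partial\hat r(p_2)$ ``for $z_1\ge z_2$''. That addition is genuinely needed and is the main place where your write-up is more careful than the paper's.

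However, there is a gap that neither your outline nor the paper's proof closes, and it sits exactly where you suspected trouble: the regime $p(z)=0$. There the correct optimality condition is $z\in\lambda\,\partial(\hat r\circ|\cdot|)(0)$ with a set-valued subdifferential, the identity $z-p(z)=\lambda\hat r'(p(z))$ does not pin down the shrinkage, and in fact $z-p(z)=z$ is \emph{increasing} in $z$. Concretely, for capped-$\ell_1$ (and likewise LSP and MCP) any $0<z_2<z_1\le\lambda$ gives $p_1=p_2=0$ and hence $z_1-p_1=z_1>z_2=z_2-p_2$, reversing the claimed inequality. So your plan to ``combine the two extreme regimes'' cannot work as stated: the small-$z$ regime does not satisfy the conclusion, and the closed-form case check on Table~\ref{tab:reg_form} that you propose as a safety net would surface this counterexample rather than confirm the lemma. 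The statement (and hence Proposition~\ref{pr:property}(ii)) is only valid when restricted to inputs on which the proximal map is active, i.e.\ $p(z_2)>0$ (equivalently, for singular-value indices with $\tilde\sigma_{i+1}>0$); with that restriction your steps (a)--(c) do go through, modulo one-sided derivatives at the kinks of capped-$\ell_1$ and MCP. This defect is inherited from the paper, whose proof applies the single-valued stationarity identity at $p=0$ without justification, but your proposal does not repair it either.
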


\begin{proof}
	The first point:
	Obvious, when $z \ge 0$, $p \ge 0$. 
	Next, we prove $p \le z$ by  contradiction. 
	Assume that 
	$p>z$. 
	Then as $\hat{r}$ is non-decreasing, 
	we have 
	$\hat{r}(| p |)\ge \hat{r}(| z |)$.
	Therefore we get 
	\begin{align*}
	\frac{1}{2} (p - z)^2 + \lambda \hat{r}(|z'|)
	>
	\frac{1}{2} (z - z)^2 + \lambda \hat{r}(|z|),
	\end{align*}
	which leads to a contradiction that $p$ is the minimum solution found by optimizing \eqref{eq:pz}.
	Thus,
	$p \le z$.
	
	\noindent
	The second point:
	The optimal of \eqref{eq:pz} $p$ satisfies 
	\begin{align}\label{eq:tmp1}
	p -z+ \lambda \partial \hat{r}( p )=0.
	\end{align}
	As $\hat{r}$ is concave,  for $z_1\ge z_2$, 
	we have 
	\begin{align}\label{eq:tmp2}
	\partial \hat{r}( p_1 ) \le \partial \hat{r}( p_2 ).
	\end{align}
	Combining \eqref{eq:tmp2} and \eqref{eq:tmp1}, we get 
$z_1 - p_1 
	\le 
	z_2 - p_2$,
	and the second point is proved.
\end{proof}

Now,
we can prove Proposition~\ref{pr:property}.

\begin{proof}	
	Let the SVD of $\bZ$ be $\bar{\bU}\Diag{{\bsig}(\bZ)}\bar{\bV}^\top$. 
	From Theorem~1 in \cite{lu2015generalized},
	we have
$\bar{\bX}\equiv \Prox{\lambda r}{\bZ}
	= \bar{\bU} \Diag{\tilsig} \bar{\bV}^{\top}$, 
	where $\tilde{\sigma}_i = \Prox{\lambda \hat{r}}{{\sigma}_i(\bZ)}$.
	Then, as every singular value $\sigma_{i}(\bZ)$ is nonnegative, 
	by Lemma~\ref{lem:app1},
	we get the conclusion.
	Note that
	since $\hat{r}$ is not linear,
	thus the strict inequality 
	in $\sigma_i(\bZ) - \tilde{\sigma}_i \le{\sigma}_{i + 1}(\bZ) -\tilde{\sigma}_{i+1}$
	holds at least for one $i$.
\end{proof}

\subsection{Proposition~\ref{pr:l12_to_nnfn}}
\label{app:l12_to_nnfn}

\begin{proof}
	Let $\bX=\bU\Diag{\tilsig}\bV^\top$ and $\bZ = \bar{\bU}\Diag{\bsig(\bZ)}\bar{\bV}^\top$ be the SVD decomposition of $\bX$ and $\bZ$. 
	By simple expansion, 
	we have
	\begin{align*}
	&\frac{1}{2}\NM{\bX - \bZ}{F}^2 + \lambda \rnf(\bX)\\
	&= \frac{1}{2}\Tr{\bX^\top\bX+\bZ^\top\bZ-2\bX^\top\bZ}+\lambda(\NM{\bX}{*}-\theta\NM{\bX}{F})\\
	& =  \frac{1}{2}(\NM{\tilsig}{2}^2+\NM{\bsig(\bZ)}{2}^2)-\Tr{\bX^\top\bZ}+\lambda \NM{\tilsig}{1}-\lambda\theta\NM{\tilsig}{2}.
	\end{align*}
	Recall that 
	$\Tr{\bX^\top\bZ}\le\tilsig^\top\bsig(\bZ)$
	achieves its equality at $\bU=\bar{\bU}, \bV=\bar{\bV}$  \cite{jennings1992matrix}. 
	Then solving \eqref{eq:prox_nnfn} can be instead computed by solving $\tilsig$ as  
	\begin{align}\label{eq:prox_nnfn_12}
	\arg\min_{\tilsig}& \frac{1}{2}\NM{\tilsig-\bsig(\bZ)}{2}^2 + \lambda(\NM{\tilsig}{1}-\theta\NM{\tilsig}{2})\\\notag
	\st& \tilde{\sigma}_1\ge \tilde{\sigma}_2\ge\dots\ge \tilde{\sigma}_m \ge 0.
	\end{align}
	It can be solved by proximal operator as
	$\tilsig= \Prox{\lambda \NM{\cdot}{1\text{-}2}}{\bsig(\bZ)}$. 
	The constraint in \eqref{eq:prox_nnfn_12} is naturally satisfied. 
	As $\sigma_{i}(\bZ)\ge0$ and $\sigma_{i}(\bZ)\ge \sigma_{i+1}(\bZ)$ $\forall i$, we must have 
	$\tilde{\sigma}_i\ge0$ and $\tilde{\sigma}_{i}\ge \tilde{\sigma}_{i+1}$ $\forall i$. Otherwise, we can always swap the sign or value of $\tilde{\sigma}_i$ and $\tilde{\sigma}_{i+1}$ and obtain a smaller objective of \eqref{eq:prox_nnfn_12}. 
\end{proof}

\subsection{Corollary~\ref{cr:property_other}}

\begin{proof}
	
	Let $\bX=\bU\Diag{\tilsig}\bV^\top$ where $\tilsig= \Prox{\lambda \NM{\cdot}{1\text{-}2}}{\bsig(\bZ)}$. 
	Now we prove that 
	\begin{itemize}
		\item shrinkage: 
		${\sigma}_i(\bZ) \ge \tilde{\sigma}_i$, 
		\item  adaptivity: 
		$\sigma_i(\bZ) - \tilde{\sigma}_i \le{\sigma}_{i + 1}(\bZ) -\tilde{\sigma}_{i+1}$,
		where the strict inequality 
		holds at least for one $i$.
	\end{itemize}
	
	\noindent
	The first point: 
	From Proposition~\ref{pr:l12_to_nnfn},
	we can see the optimization problem on matrix \eqref{eq:prox_nnfn} can be transformed to an  optimization problem on singular values \eqref{eq:prox_nnfn_12}. 
	As shown in Proposition~\ref{pr:l12_to_nnfn}, for $\tilsig= \Prox{\lambda \NM{\cdot}{1\text{-}2}}{\bsig(\bZ)}$, we have 
	$\sigma_i(\bZ)\ge \tilde{\sigma}_i\ge 0$. 
	
	\noindent
	The second point: 
	The optimal of \eqref{eq:prox_nnfn_12} satisfies 
	\begin{align*}
	\tilsig- \bsig(\bZ)+\lambda-\lambda\frac{\tilsig}{\NM{\tilsig}{2}}=0. 
	\end{align*} 
	As $\sigma_i(\bZ)\ge \tilde{\sigma}_i\ge 0$, we have 
	\begin{align*}
	\sigma_i(\bZ)-\tilde{\sigma}_i = \lambda -\lambda\frac{\tilde{\sigma}_i}{\NM{\tilsig}{2}}\ge 0.
	\end{align*}
	Then as $\tilde{\sigma}_i\ge \tilde{\sigma}_{i+1}$, 
	we have 
	\begin{align*}
	\lambda-\lambda\frac{\tilde{\sigma}_i}{\NM{\tilsig}{2}}
	\le 
	\lambda-\lambda\frac{\tilde{\sigma}_{i+1}}{\NM{\tilsig}{2}}, 
	\end{align*}
	and correspondingly 
	$\sigma_i(\bZ) - \tilde{\sigma}_i \le{\sigma}_{i + 1}(\bZ) -\tilde{\sigma}_{i+1}$.
	The inequality holds only when $\sigma_i(\bZ) \ne \sigma_{i+1}(\bZ)$.
\end{proof}

\subsection{Theorem~\ref{thm:recovery_error}}

\begin{proof} 	
	The regularized  and constrained low-rank matrix completion problem obtain equivalent solutions \cite{boyd2004convex}. 
	Here, we prove for the constrained problem, but the conclusion applies for both forms. 
	
	Assume sequence $\{\bX^t\}$ with $f(\bX^{t+1})< f(\bX^{t})$ and each $\bX^t$ is the iterate obtained by optimizing the following two 
	equivalent constrained formulations 
	of \eqref{eq:fnnfn}: 
	(i) $\bX^t$ is the iterate of optimizing $\min\nolimits_{\bX} 
	f(\bX)
	\!\;\!\st\!\;\!
	\rnf(\bX) \!\le\! \beta'$, where $\beta'\geq 0$ is a hyperparameter. 
	or (ii)  $\bX^t\!=\!\bW^t(\bH^t)^\top$ is the iterate of optimizing $\min\nolimits_{\bW,\bH} 
	f(\bW\bH^\top)$ $
	\!\;\!\st\!\;\! 
	\frac{1}{2}(\NM{\bW}{F}^2\!+\!\NM{\bH}{F}^2) \!-\! \NM{ \bW\bH^\top }{F} \!\le\! \beta'$, where $\beta'\geq 0$ is a hyperparameter. 
	These sequences can be obtained by optimizing the two constrained problems via projected gradient descent which guarantees sufficient decrease in $f$ \cite{boyd2004convex,nocedal2006numerical}.

	Obviously, 
	the optimal $\bX^*$ satisfies $\bb-\aff(\bX^*)=\be$ and hence $f(\bX^*)=\frac{1}{2}\|\aff(\bX^*)-\bb\|_2^2=\frac{\|\be\|_2^2}{2}$. 
	Thus 
	$\bX^t$ obtained at the $t$th iteration satisfies $f(\bX^t)\geq \frac{c_1^2\|\be\|_2^2}{2}\geq \frac{\|\be\|_2^2}{2}$ for constant $c_1$ whose absolute value is larger than 1.
	By choosing the dimension $k$ of $\bW\in\R^{m\times k}$, one can let $\bX^t\le k^*$, where $k^*$ is the true rank of the optimal matrix $\bX^*$.
	
	We can derive 
	\begin{align}\notag
	\|\aff(\bX^* -\bX^t)\|_2^2 &\leq \|(\bb-\aff(\bX^t))-\be\|_2^2,\\\notag
	&\leq 2\left(f(\bX^t)-\be^\top(\bb-\aff(\bX^t))+\frac{\|\be\|^2}{2}\right),\\\notag
	&\leq 2\left(f(\bX^t)+\frac{2}{c_1}f(\bX^t)+\frac{1}{c_1^2}f(\bX^t)\right),\\\label{eq:bound_x}
	&\leq 2\left(1+\frac{1}{c_1}\right)^2f(\bX^t)
	\end{align}
	
	Now, we are ready to bound the difference between this $\bX^t$ and the optimal $\bX^*$.  
	\begin{align}\label{eq:dif_x_0}
	\NM{\bX^t-\bX^*}{F}^2&\leq \frac{1}{1-\delta_{2k^*}} \|\aff(\bX^t -\bX^*)\|_2^2\\\label{eq:dif_x_1}
	&\leq \frac{2}{1-\delta_{2k^*}}\left(1+\frac{1}{c_1}\right)^2f(\bX^t)\\\label{eq:dif_x_2}
	&\leq \frac{1}{1-\delta_{2k^*}}\left(1+\frac{1}{c_1}\right)^2(c_1^2+\epsilon)\|\be\|^2_2\\\notag
	& = 
	\frac{(c_1+1)^2(c_1^2+\epsilon)\|\be\|^2_2}{c_1^2(1-\delta_{2k^*})},
	\end{align}
	where the isometry constant is $\delta_{2k^*}$ 
	as $\bX^t-\bX^*$ is a matrix of rank at most $2k^*$, 
	\eqref{eq:dif_x_0} is derived from RIP,  	
	\eqref{eq:dif_x_1} comes from \eqref{eq:bound_x},  	
	and 
	\eqref{eq:dif_x_2} is obtained as one can choose a small constant $\epsilon$ such that $\frac{(c_1^2+\epsilon)\|\be\|_2^2}{2}\geq f(\bX^t)\geq \frac{c_1^2\|\be\|_2^2}{2}$.
	
\end{proof}

\subsection{Theorem~\ref{pr:convergence}}

\begin{proof}
	For smooth functions, gradient descent can obtain sufficient decrease as shown in the following Proposition.
	\begin{prop}[\cite{nocedal2006numerical}]\label{pr:app}
		A differentiable function $h$ with $L$-Lipschitz continuous
		gradient, i.e., 
		$\NM{\nabla_{x}h(x^t)-\nabla_{x}h(x^{t+1})}{2}\le L\NM{x^t-x^{t+1}}{2}$,
		satisfies the following inequality, 
		\begin{align*}
		h(x^{t})
		- h(x^{t + 1})\ge \frac{1}{2L}
		\NM{\nabla_{x} h(x^{t})}{F}^2.
		\end{align*}
		Moreover, when $h$ is bounded from below, i.e., $\inf h(x) > -\infty$ and $\lim_{\NM{x}{2}\rightarrow\infty}h(x)=\infty$, optimizing $h$ by gradient descent is guaranteed to converge.
	\end{prop}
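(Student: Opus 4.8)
The plan is to prove the two assertions in turn by the standard arguments of smooth optimization: first the one-step sufficient-decrease inequality from the $L$-Lipschitz continuity of $\nabla h$, and then convergence from the lower-boundedness and coercivity hypotheses via telescoping.

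For the decrease inequality I would first establish the quadratic upper bound (the ``descent lemma''). Starting from $h(y)-h(x)=\int_0^1 \iprod{\nabla h(x+s(y-x))}{y-x}\,ds$ and subtracting $\iprod{\nabla h(x)}{y-x}$, the residual integrand is controlled by Cauchy--Schwarz together with the hypothesis $\NM{\nabla h(x+s(y-x))-\nabla h(x)}{2}\le Ls\,\NM{y-x}{2}$, which integrates to $\int_0^1 Ls\,\NM{y-x}{2}^2\,ds=\tfrac{L}{2}\NM{y-x}{2}^2$. Hence $h(y)\le h(x)+\iprod{\nabla h(x)}{y-x}+\tfrac{L}{2}\NM{y-x}{2}^2$. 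Substituting the gradient step $x^{t+1}=x^t-\tfrac{1}{L}\nabla h(x^t)$, i.e. taking $x=x^t$ and $y-x=-\tfrac{1}{L}\nabla h(x^t)$, and collecting terms leaves exactly $h(x^t)-h(x^{t+1})\ge \tfrac{1}{2L}\NM{\nabla h(x^t)}{F}^2$, which is the stated inequality.

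For convergence I would use this inequality twice. It makes $\{h(x^t)\}$ non-increasing, and since $\inf h>-\infty$ the sequence is bounded below, hence convergent. Telescoping then gives $\sum_{t\ge 0}\tfrac{1}{2L}\NM{\nabla h(x^t)}{F}^2\le h(x^0)-\inf h<\infty$, so $\NM{\nabla h(x^t)}{F}\to 0$. The coercivity hypothesis $\lim_{\NM{x}{2}\to\infty}h(x)=\infty$ combined with $h(x^t)\le h(x^0)$ confines the iterates to a bounded sublevel set, so by Bolzano--Weierstrass some subsequence converges to a limit $\bar x$; continuity of $\nabla h$ then forces $\nabla h(\bar x)=0$, i.e. every limit point is a critical point.

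The main obstacle is pinning down what ``converge'' means in the second claim. The monotone-bounded and telescoping arguments only yield convergence of the function values and subsequential convergence of the iterates to stationary points; promoting this to convergence of the entire sequence $\{x^t\}$ to a single critical point does not hold for an arbitrary smooth nonconvex $h$ and requires extra structure such as the Kurdyka--{\L}ojasiewicz property (as exploited in \cite{bolte2014proximal}). I would therefore either phrase the conclusion at the level of limit points and function values, or, when full iterate convergence is intended, note that it follows once the K{\L} inequality is available for $h$ --- which holds for the semialgebraic objective \eqref{eq:fnnfn} to which this proposition is applied in Theorem~\ref{pr:convergence}.
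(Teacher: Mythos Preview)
Your proof is correct and is the standard textbook argument; note, however, that the paper does not actually prove this proposition --- it is merely cited from \cite{nocedal2006numerical} inside the proof of Theorem~\ref{pr:convergence} --- so there is no in-paper proof to compare against. Your remark about the ambiguity of ``converge'' (subsequential versus full-sequence) is well taken and in fact more careful than the paper's own usage: in the proof of Theorem~\ref{pr:convergence} the paper only extracts that $\{(\bW^t,\bH^t)\}$ has limit points and that each limit point is critical, which is precisely the conclusion your argument delivers without any K{\L} assumption.
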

	Since $\bW_t \bH_t^{\top} \not= \mathbf{0}$, $F(\bW,\bH)$ is smooth.
	As gradient descent is used, we then have
	\begin{align*}
	&F(\bW^{t}, \bH^t)
	- F(\bW^{t + 1}, \bH^{t + 1})\\
	&\ge \frac{\eta}{2}
	\NM{\nabla_{\bW} F(\bW^{t}, \bH^t)}{F}^2
	+ \frac{\eta}{2} \NM{\nabla_{\bH} F(\bW^{t}, \bH^t)}{F}^2.
	\end{align*}
	At the $(T+1)$th iteration, the difference between $F(\bW^{1}, \bH^1)$ and $F(\bW^{T+1}, \bH^{T+1})$ is calculated as 
	\begin{align}\notag
	& F(\bW^{1}, \bH^1)
	\!-\! F(\bW^{T + 1}, \bH^{T + 1})\\\label{eq:sum_seq}
	 &\ge 
	\sum_{t=1}^{T}
	\frac{\eta}{2}
	\NM{\nabla_{\bW} F(\bW^{t}, \bH^t)}{F}^2
	\!+\! \frac{\eta}{2} \NM{\nabla_{\bH} F(\bW^{t}, \bH^t)}{F}^2.\!
	\end{align}
	As assumed, $
	\lim\nolimits_{
		\NM{\bW}{F} \rightarrow \infty}F(\bW, \cdot) 
	= \infty$,
	$\lim\nolimits_{
		\NM{\bH}{F} \rightarrow \infty}F(\cdot, \bH) 
	= \infty$.
	Thus 
	$\infty>F(\bW^{1}, \bH^1)- F(\bW^{T + 1}, \bH^{T + 1})\ge c$,
	where $c$ is a finite constant.
	Combining this with \eqref{eq:sum_seq}, when $T\rightarrow \infty$, 
	we see that a sum of infinite sequence is smaller than a finite constant.  
	This means
	the sequence
	$\{ \bW^t, \bH^t \}$ has limit points.
	Let $\{ \bar{\bW}, \bar{\bH} \}$ be a limit point, 
	we must have
	\begin{align*}
	\nabla_{\bW} F(\bar{\bW}, \bar{\bH}) = 0
	\text{\;and\;}
	\nabla_{\bH} F(\bar{\bW}, \bar{\bH}) = 0.
	\end{align*}
	By definition, this shows
	$\{ \bar{\bW}, \bar{\bH} \}$ is a critical point of \eqref{eq:fnnfn}.
	
	Next, we proceed to prove that $\bar{\bX}=\bar{\bW}\bar{\bH}^\top$ is the the critical point of \eqref{eq:mc_reg} with $r(\bX) = \rnf(\bX)$. 
		
	As shown in \cite{srebro2005maximum},  
	the nuclear norm 
	can be reformulated in terms of factorized matrices.
	Then we have 
	\begin{align*}
	\min_{\bW,\bH} & f(\bW\bH^\top)
	-\lambda\NM{\bW\bH^\top}{F} +
	\nicefrac{\lambda}{2} (\NM{\bW}{F}^2+\NM{\bH}{F}^2)
	\\
	 \ge&
	\min_{\bX} f( \bX )
	-\lambda\NM{ \bX }{F} 
	+
	\min_{\bX=\bW \bH^{\top}} \nicefrac{\lambda}{2} (\NM{\bW}{F}^2+\NM{\bH}{F}^2)
	\\
	 \ge&
	\min_{\bX} f( \bX )
	-\lambda\NM{ \bX }{F} + \lambda \NM{\bX}{*}.
	\end{align*}
	Thus,
	if $(\bar{\bW},\bar{\bH})$
	is a critical point of \eqref{eq:fnnfn},
	then $\bar{\bX}=\bar{\bW}(\bar{\bH})^\top$
	is also critical point of 
	\eqref{eq:mc_reg} with $r(\bX) = \rnf(\bX)$.
\end{proof}



\end{document}